\theoremstyle{thmstyleone}%
\newtheorem{lem}{Lemma}[section]
\newtheorem{Thm}{Theorem}[section]
\newtheorem{Def}{Definition}[section]
\theoremstyle{thmstyletwo}%
\theoremstyle{thmstylethree}%
\begin{document}

\title[Article Title]{Maximum-Entropy Analog Computing Approaching ExaOPS-per-Watt Energy-efficiency at the RF-Edge}


\author[1]{\fnm{Aswin} \sur{Undavalli}}\email{a.undavalli@wustl.edu}
\equalcont{These authors contributed equally to this work.}

\author[2]{\fnm{Kareem} \sur{Rashed}}\email{rashedk@oregonstate.edu}
\equalcont{These authors contributed equally to this work.}

\author[1]{\fnm{Zhili} \sur{Xiao}}\email{xiaozhili@wustl.edu}
\equalcont{These authors contributed equally to this work.}

\author[2]{\fnm{Arun} \sur{Natarajan}}\email{Arun.Natarajan@oregonstate.edu}

\author*[1]{\fnm{Shantanu} \sur{Chakrabartty}}\email{shantanu@wustl.edu}

\author*[3]{\fnm{Aravind} \sur{Nagulu}}\email{a.nagulu@northeastern.edu}

\affil[1]{Department of Electrical and Systems Engineering, Washington University in St. Louis, Saint Louis, MO}
\affil[2]{Department of Electrical Engineering and Computer Science, Oregon State University, Corvallis, OR}
\affil[3]{Department of Electrical and Computer Engineering, Northeastern University, Oakland, CA}





\abstract{In this paper, we demonstrate how the physics of entropy production, when combined with symmetry constraints, can be used for implementing high-performance and energy-efficient analog computing systems. At the core of the proposed framework is a generalized maximum-entropy principle that can describe the evolution of a mesoscopic physical system formed by an interconnected ensemble of analog elements, including devices that can be readily fabricated on standard integrated circuit technology. We show that the maximum-entropy state of this ensemble corresponds to a margin-propagation (MP) distribution and can be used for computing correlations and inner products as the ensemble's macroscopic properties.
Furthermore, the limits of computational throughput and energy efficiency can be pushed by extending the framework to non-equilibrium or transient operating conditions, which we demonstrate using a proof-of-concept radio-frequency (RF) correlator integrated circuit fabricated in a 22 nm SOI CMOS process. The measured results show a compute efficiency greater than 2 Peta ($10^{15}$) Bit Operations per second per Watt (PetaOPS/W) at 8-bit precision and greater than 0.8 Exa ($10^{18}$) Bit Operations per second per Watt (ExaOPS/W) 
at 3-bit precision for RF data sampled at rates greater than 4 GS/s. Using the fabricated prototypes, we also showcase several real-world RF applications at the edge, including spectrum sensing, and code-domain communications.}

\keywords{Analog computing, Correlators, Radio-frequency, Compute-in-memory, Edge computing}



\maketitle

\section{Introduction}\label{sec:intro}

Advances in artificial intelligence and machine learning have created opportunities for new computing platforms that can trade off computational precision for speed and energy efficiency~\cite{Mark2014Energy}. This trend has led to the resurgence of analog computing techniques, which exploit primitives such as charge or current conservation and the continuous-time dynamics of the substrate to achieve high throughput and energy efficiency. Unlike digital systems that rely on discrete and clocked logic, analog computing systems operate at the speed of physical processes, enabling inherently parallel and energy-efficient computation~\cite{chakrabartty_jssc2007}. In a conventional analog computing architecture, atomic operations like multiplications are implemented using physical multiplicative processes such as Ohm's law~\cite{Danial_NatElec2019, Wan_Nature2022}, translinear principles~\cite{Kim_JSSC2022, chakrabartty_jssc2007}, or capacitor charging~\cite{Li_JSSC2024}. These multiplier units are then physically connected together in a crossbar configuration (shown in Fig.~\ref{fig:entropy_fig1}a for two vectors ${\bf x}$ and ${\bf y}$) to implement vector operations like inner products~\cite{Kim_JSSC2022} or correlations~\cite{Li_JSSC2024}. While this modular design paradigm forms the core of many state-of-the-art analog machine learning and neuromorphic hardware architectures~\cite{Chakraborty_ProcIEE2020}, the approach has several scaling limitations. First, computing using these physical multipliers limits the choice to specific devices, such as memristors~\cite{Danial_NatElec2019}, transistors in the sub-threshold or triode regimes~\cite{Kim_JSSC2022, chakrabartty_jssc2007}, and digitally tunable capacitors~\cite{Li_JSSC2024}. Second, the array's speed, energy-efficiency, and scalability are limited by the interconnect bandwidth, cross-talk between the physical multipliers and the settling-time of the peripheral read-out circuits~\cite{Amin2022ISCASParasiticsPartitioning}. 

In this paper, we propose an analog computing paradigm where the entire array, including the interconnects, functions as a mesoscopic physical system, utilizing only statistical laws like the physics of entropy production for computation~\cite{Landauer1961EntropyProduction,peliti2021stochastic}. The inputs ${\bf x}$ and ${\bf y}$ enforces the boundary conditions of the mesostate ensemble, as depicted in Fig.~\ref{fig:entropy_fig1}b, which could be formed by the energy-barriers due to transistors and the strength of the coupling between the energy-barriers is determined by the interconnects and the parasitic capacitances.
As shown in Fig.~\ref{fig:entropy_fig1}c, each individual mesostate are collections of microstates (for example electron configurations or current paths) and for this work we will assume that the stochastic dynamics of the microstates (as shown in Fig.~\ref{fig:entropy_fig1}c) operate at a much faster time-scale than the mesoscopic time-scales~\cite{peliti2021stochastic}. Under the influence of the boundary constraints, the equilibrium or the maximum-entropy configuration of the mesostates encodes the computation of interest. In this work we show that when rectangular symmetry constraints are enforced on the inputs, a macroscopic ensemble potential tracks a monotonic function $G({\bf x}^T{\bf y})$ of the inner-product or the correlation, as shown in Fig.~\ref{fig:entropy_fig1}b. While this approach is general enough to be applied to many mesoscopic physical ensembles, in this paper, we demonstrate the proof-of-concept and associated performance benefits using a custom silicon integrated circuit, also shown in Fig.~\ref{fig:entropy_fig1}b. Specifically, we leverage the result that a Margin Propagation (MP) computing framework~\cite{Ming_TCAS2012} is a maximum-entropy configuration of a mesoscopic ensemble with a rectangular symmetry boundary conditions. The symmetry is illustrated in Fig.~\ref{fig:entropy_fig1}c for two ensembles each comprising of four potential wells $E^{\pm}_{1-2}$ and $E^{\pm}_{3-4}$, whose heights are determined by inputs $\{x_1, y_1\}$ and $\{x_2, y_2\}$. The rectangular boundary conditions for each ensemble is set according to $(x_1+y_1, -x_1-y_1, x_2+y_2, -x_2-y_2)$ and $(x_1-y_1, -x_1+y_1, x_2-y_2, -x_2+y_2)$. The height of the potential well determines the probability of occupancy of the mesostate and is denoted by $(p_1^+, q_1^+, p_2^+, q_2^+)$ and $(p_1^-, q_1^-, p_2^-, q_2^-)$ respectively and shown in Fig.~\ref{fig:entropy_fig1}d. Note that the potential wells are shown to be separated in Fig.~\ref{fig:entropy_fig1}c, which allows us to specify discrete probabilities for each mesostate, instead of continuous distributions of the underlying microscopic configuration shown in Fig.~\ref{fig:entropy_fig1}d. These two ensembles form a part of the larger ensemble as shown in Fig.~\ref{fig:entropy_fig1}e, where we have arranged the wells labeled by $x_i + y_i, -x_i - y_i$ and $x_i - y_i, y_i - x_i$ on a 2-D grid for the sake of exposition. The corresponding probability for each of the mesostates are specified by $p_i^+, q_i^+$ and $p_i^-, q_i^-$ with $\sum_i^N p_i^+ + q_i^+ = 1$ and $\sum_i^N p_i^- + q_i^- = 1$ respectively. In addition to the rectangular symmetry constraint, we assume an underlying energy-like constraint~\cite{Tsallis1988GeneralizedEntropy,JaynesPhysRev1957}
\begin{subequations}\label{eqn:ensemble constraint}
\begin{align}
\sum_{i=1}^N ( x_i + y_i)(p_i^+ - q_i^+) = U^+- E_0, \\ \sum_{i=1}^N ( x_i - y_i)(p_i^- - q_i^-) = U^-- E_0,
\end{align}
\end{subequations}
where $U^+ \ge 0$ and $U^- \ge 0$ denote the energy of each ensemble, respectively. If, other than the symmetry and energy constraints, nothing else is assumed about the interactions among the mesostates, then, according to the maximum entropy principle~\cite{JaynesPhysRev1957}, the most appropriate choice of the steady-state ensemble distribution is the one with the highest entropy.  
\begin{figure} [htbp] 
  \centering
\includegraphics[width=1.0\textwidth]{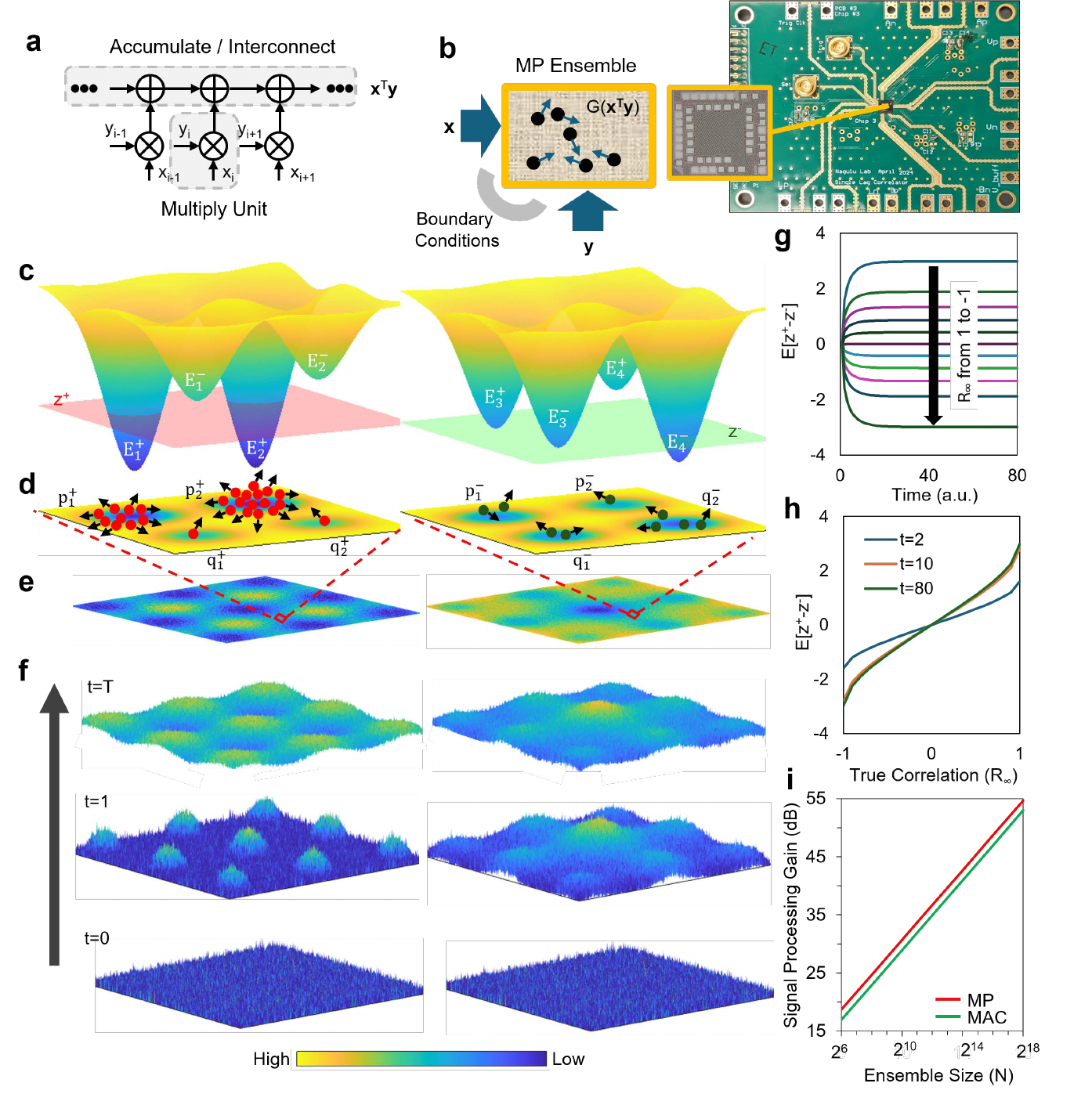}
  \caption{(a) Conventional analog computing architecture for computing inner-products and correlations using multiply-accumulate (MAC) operations. (b) Maximum-entropy analog computing architecture and an integrated circuit prototype for demonstrating the proof-of-concept. Inputs ${\bf x},{\bf y}$ are presented as boundary conditions of a statistical ensemble and the ensemble potential encodes the inner-product/correlation as $G({\bf x}^T{\bf y})$. (c) Rectangular symmetry constraints on the potential wells with respective heights ($E_1^{\pm} = E_0 \pm x_1 \pm y_1$, $E_2^{\pm} = E_0 \pm x_2 \pm y_2$, $E_3^{\pm} = E_0 \pm x_1 \mp y_1$, $E_4^{\pm} = E_0 \pm x_2 \mp y_2$). (d) Mesostates and microstates corresponding to each potential well and part of a larger ensemble (e). (f) Conceptual illustrations of a system evolving to its maximum-entropy state with under energy and symmetry constraints. The distribution of microstates within the energy wells, illustrated as red and green spheres on the heat map in (d) and denoted by $p_i^{\pm}$ for $E_1^+, E_2^+, E_3^+, E_4^+$ and $q_i^{\pm}$ for $E_1^-, E_2^-, E_3^-, E_4^-$ , will be determined by the ensemble potentials $z^+$ and $z^-$, shown by the red and green planes. 
  (g) Simulated dynamics for $h(x) = \text{max}(0,x)$ showing the expected difference between $z^+$ and $z^-$ as a function of time and across different correlation values. (h) The expected difference between $z^+$ and $z^-$ at different time steps, as the correlation is varied. (i) Comparison of the signal processing gains, SPG = 1/$\text{(rms error)}^2$ with respect to the ensemble size, for MAC-based correlators and above MP maximum entropy approach at different time steps for $t = 2, 10, 80$, where $h(x) = \text{max}(0,x)$.  
}
  \label{fig:entropy_fig1}
\end{figure}

Shannon-entropy~\cite{ShannonEntropy1948,peliti2021stochastic} would be a natural choice for measuring entropy, however, we use a more general form based on the Tsallis entropy~\cite{Havrda1967QuantificationMO,Tsallis1988GeneralizedEntropy,Tsallis2003introductionnonextensive}, which does not assume any specific information about the long-range interactions between the ensemble elements. The Tsallis entropy $S^+$ and $S^-$ for each ensemble is defined as 
\begin{equation}\label{eqn:q-functional}
S^{\pm}_{\eta} =  \frac{1 - \sum_i^N (p_i^{\pm})^\eta - \sum_i^N (q_i^{\pm})^\eta}{\eta - 1}, 
\end{equation}
where the index $\eta \in \mathbb{N}$ is a parameter that controls specific characteristics of the entropy function. Note that, for $\eta = 1$, $S^{\pm}_{1}$ coincides with the Shannon entropy implying that our computing paradigm is general enough to be applicable to different physical systems. In the Methods section~\ref{subsec:Maximum entropy simulation}, we show that the maximum-entropy distribution takes the form
$p_i^{\pm} = \frac{1}{\gamma^{\pm}}h(x_i \pm y_i - z^{\pm})$, $q_i^{\pm} = \frac{1}{\gamma^{\pm}} h( -x_i \mp y_i - z^{\pm})$, where $h(.)$ is a positive monotonically increasing convex function and $\gamma^{\pm} > 0$ are hyperparameters which are functions of $\eta$ and $U^{\pm}$~\cite{Wada2005Nonselfreferential}. The macroscopic ensemble potentials $z^{\pm}$ ensures that the constraints
\begin{subequations} \label{eqn:general MP}
\begin{align}
\sum_{i=1}^N h(x_i + y_i - z^+) + h(-x_i - y_i - z^+) &= \gamma^+, \\
\sum_{i=1}^N h(x_i - y_i - z^-) + h(-x_i + y_i - z^-) &= \gamma^-,
\end{align}
\end{subequations}
are satisfied under equilibrium (or steady-state) conditions. Fig.~\ref{fig:entropy_fig1}f conceptually illustrates the evolution of the probabilities to the maximum-entropy distribution corresponding to the steady-state condition~\ref{eqn:general MP}. In Fig.~\ref{fig:entropy_fig1}c, we show the macroscopic ensemble potentials $z^{\pm}$ which corresponds to the steady-state levels (like Fermi-level) above which the occupancy probability of the mesostate is zero. 

Equations~\ref{eqn:general MP} correspond to the generalization of a Margin Propagation analog computing paradigm~\cite{Ming_TCAS2012} which has previously been used to implement inner products~\cite{Ming_TCAS2012,kumar_tcas2024} and correlators~\cite{Kareem_JSSC2024,Aswin_ISSCC2025,Zhili_TCAS2024}. In~\cite{Zhili_TCAS2024}, it was shown that under specific statistical assumptions on ${\bf x}$ and ${\bf y}$, the expected value $\mathcal{E}(z^+ - z^-) \approx G({\bf x}^T{\bf y})$ is a function of the inner product or correlation ${\bf x}^T{\bf y}$. While this result was obtained when the steady state equations~\ref{eqn:general MP} were satisfied, we claim in this work that the computing framework also holds for far-from-equilibrium conditions as well. The framework is similar to the stochastic thermodynamics~\cite{peliti2021stochastic}, but applied here to generalized entropy and demonstrated for a practical analog computing hardware.  Fig.~\ref{fig:entropy_fig1}g shows the time evolution of $\mathcal{E}(z^+ - z^-)$ for the entropy function $S^{\pm}_{2}$ which corresponds to $h(x) = \text{max}(0,x)$ in~\ref{eqn:general MP}. The  Methods section~\ref{subsec:Signal Processing Gain & calibration} and supplementary materials describe the equations that have been used to simulate the time evolution of the system and the generation of the random vectors ${\bf x}$ and ${\bf y}$ with a predetermined correlation $R_{\infty}$. As shown in  Fig.~\ref{fig:entropy_fig1}g, for a given $R_{\infty}$, $\mathcal{E}(z^+ - z^-)$ follows a distinct temporal trajectory from the non-equilibrium state to the equilibrium state. This implies that non-equilibrium measurement of $\mathcal{E}(z^+ - z^-)$ could be used for computing correlations (or inner products).  Fig.~\ref{fig:entropy_fig1}h plots $\mathcal{E}(z^+ - z^-)$ as a function of $R_{\infty}$ and at different measurement times. Note that the function $G(R_{\infty})$ is monotonic, which means that it can be inverted to obtain an estimate of the correlation $R_{MP}$. More details on the post-calibration procedure for estimating function $G^{-1}$ to recover $R_{MP}=G^{-1} (z^+-z^-)$ is provided in section~\ref{subsec:Measurement_Setup} and the supplementary. In fact, as the ensemble size $N$ increases, the precision $R_{MP}$ in approximating the true correlation $R$ increases with similar asymptotic error rates as multiply-accumulate (MAC) based correlators according to $|\epsilon_{MP} |\approx |\epsilon_{MAC}|\sim \mathcal{O}\left(1/\sqrt{N}\right)$. This is shown in Fig.~\ref{fig:entropy_fig1}i for an MP-correlator with $h(x) = \text{max}(0,x)$ at different time steps, where the accuracy of correlation estimations, quantified by the signal processing gain (SPG) = $1/\epsilon_{MP}^2$, is higher than the MAC-based computation. The implication of the non-equilibrium computing of correlation (or inner products) and precision scaling with ensemble size is that the proposed approach can be used to push the limits of computational throughput and energy-efficiency. We demonstrate this using a proof-of-concept integrated circuit and hardware implementation shown in Fig.~\ref{fig:entropy_fig1}b.

\begin{figure}[htbp]
  \centering
\includegraphics[width=1.0\textwidth]{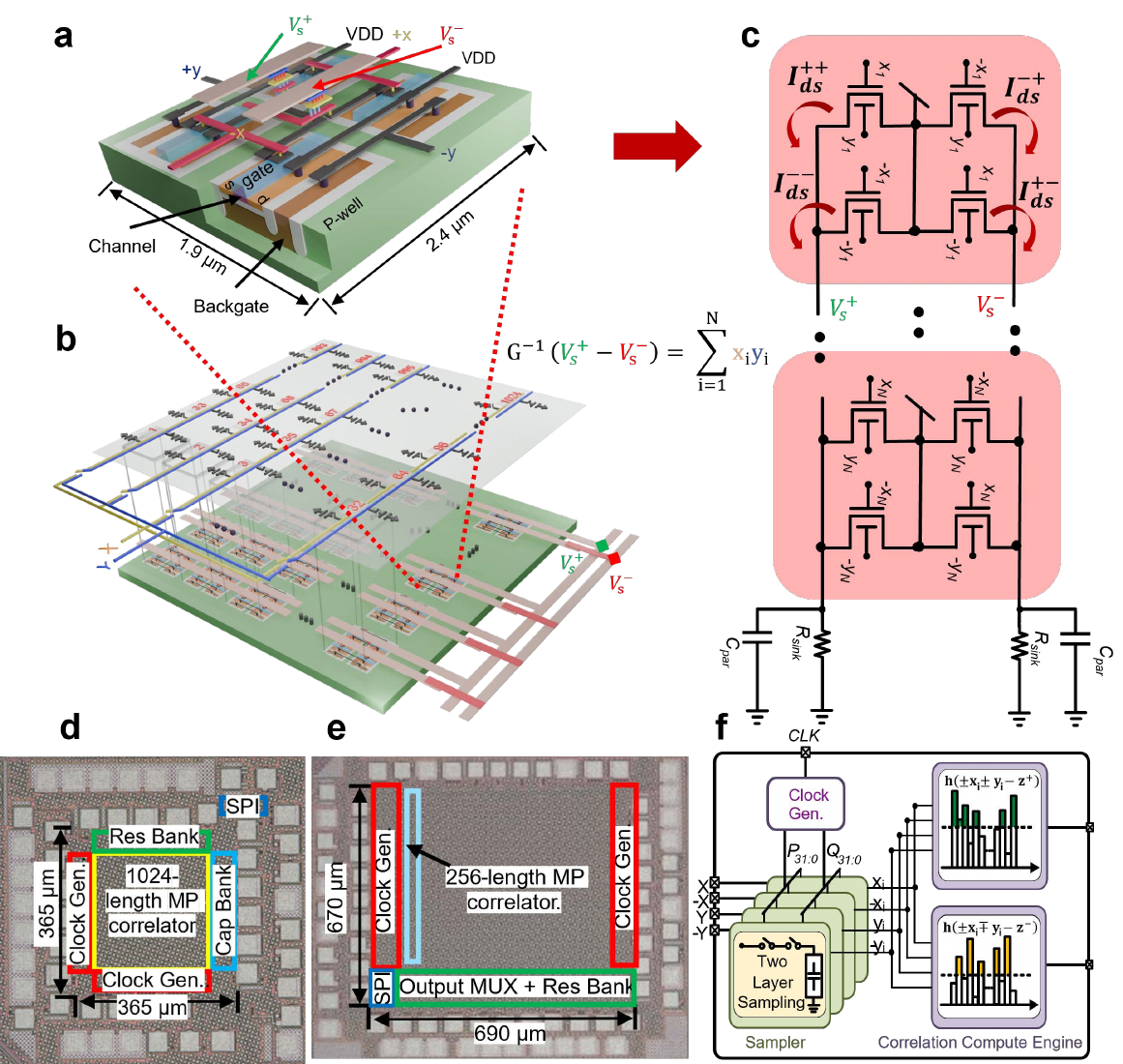}
  \caption{\textbf{Non-equilibrium MP-based analog computing realized in a Standard CMOS Process}. 
  (a) Layout and cross-sectional view of a 4-transistor MP-unit cell arranged in a centroid configuration. (b) Array of the MP-unit cells with interconnects and sampling capacitors. (c) Equivalent circuit model of the array comprising of dual-gate NMOS transistors enforcing rectangular symmetry constraints between the operands, and readout equivalent circuit. Die micrograph of two MP-ensembles fabricated in a 22\,nm SOI CMOS process and functioning as (d) N=1024 length correlator and (e) N=256 length correlator. (f)  Architecture showing samplers and clock generation modules interfacing with the correlator core in (d) and (e).
  }\label{fig:CMOS_schematic}
\end{figure}
While charge-based representation would be a natural mechanism to implement the ensemble dynamics shown in Fig.~\ref{fig:entropy_fig1}f on practical semiconductor devices, we instead use a currents to demonstrate the maximum-entropy computing paradigm. This choice is based on the fact that it is easier to track currents and associated leakage paths, as opposed to charges.
The energy barriers and the respective coupling between the potential wells in Fig.~\ref{fig:entropy_fig1}c were implemented using an equivalent circuit shown in  Fig.~\ref{fig:CMOS_schematic}a. The nonlinearity $h(.)$ in the MP-equation~\ref{eqn:general MP} is implemented by the drain-to-source current $I_{DS}(V_g,V_s,V_b)$ flowing through a metal-oxide-semiconductor (MOS) transistor and is a function of its gate, source, and bulk voltages. The MP-constraints and the parameter $\gamma^{\pm}$ in equation~\ref{eqn:general MP} are implemented using a load $R_{sink}$ at the source terminal. The rectangular symmetry constraint between the input operands in implemented applying the input $x$ to the transistor gate and the input $y$ to the back-gate (bulk) of the silicon-on-insulator (triple-well) transistor shown in Fig.~\ref{fig:CMOS_schematic}b. This realizes the operand summation $(x_i+y_i)$ implicitly in the transistor channel (corresponding to the height of the potential-well). Importantly, all four operands required by the MP compute are realized with four transistors driven with differential $\pm x_i$ and $\pm y_i$, resulting in a compact unit cell of $1.9\,\mu m \times 2.4\,\mu m$. The behavioral model for the circuit Fig.~\ref{fig:CMOS_schematic}a can be described using the differential equation
\begin{equation} \label{eqn:hardware_dynamics}
\sum_{i=1}^{N} I_{DS} (V_o+x_i,V_o\pm y_i,V_s^{\pm}(t) )+I_{DS} (V_o\mp x_i,V_o-y_i,V_s^{\pm}(t))= \frac{V_s^{\pm}(t)}{R_{\text{sink}}}+C_{par}\frac{dV_s^{\pm}(t)}{dt}
\end{equation}
where $I_{DS} (V_G,V_B,V_s)$ is the drain-to-source current of the nMOS transistor, $V_o$ is an offset voltage at the input and $C_{par}$ models the total load capacitance due to the interconnect and the read-out circuits. 
$R_{\text{sink}}$ determines the steady-state component of the normalization factor $\gamma$, and $C_{par}$ determines its transient component that determines the dynamics shown in Fig.~\ref{fig:entropy_fig1}g and h. The differential output, $V_{out}(t)=V_s^+(t)-V_s^-(t)$ is a monotonic function of correlation between ${\bf x}$ and ${\bf y}$ as $V_{out} \approx G_{MP}({\bf x}^T{\bf y})$, as shown in Section~\ref{subsec: MP characterization results}. Note that the equation~\ref{eqn:hardware_dynamics} does not assume a specific bias condition on the transistors (weak, moderate or strong inversion) and hence by construction is more versatile than other current-mode analog computing approaches. In the Supplementary section, we show the effect of interconnect inductance changes the transient response, and $V_{out}(t)$ still remains monotonic with respect to the correlation. However, the measured results in Section~\ref{subsec: MP characterization results} show that the effect of line inductance at the operating frequency is negligible compared to the line and load capacitance.


To validate the proposed dynamical MP-computing paradigm, we fabricated MP-ensembles of sizes 256 and 1024 respectively in a 22\,nm SOI CMOS process. The die micrograph for these prototypes are shown in Fig.~\ref{fig:CMOS_schematic}d-e. Since RF-edge computing was chosen as the specific use-case, both ensembles were used for computing correlations of lengths 256 and 1024 respectively. Also, both prototypes follow a common and a scalable architecture shown in Fig.~\ref{fig:CMOS_schematic}f. The SPI and clock generation are used to control the samplers. The differential architecture in Fig.~\ref{fig:CMOS_schematic}f first samples the input RF signals ($x,y$), which are then processed by the differential MP-correlator engine. In the sampling phase, the wideband RF inputs $x(t)$ and $y(t)$ are sampled using 2-level differential samplers, and the analog samples $\pm x_i$ and $\pm y_i$ $\forall$ $i= 1$ to $1024$ are stored on 20\,fF sampling capacitors that are connected to the MP-unit cells. The sampling capacitors are co-located with the MP-unit cells to reduce the parasitic effects and signal routing (see Fig.~\ref{fig:CMOS_schematic}b). The unit area including the sampling capacitors is $5.5\,\mu m \times 6.8\,\mu m$. Finally, a high-speed compute and process invariant biasing scheme is realized by connecting the source outputs with two current sinks, implemented as tunable resistors $R_{\text{sink}}$. During the compute phase, the $R_{\text{sink}}$ of the MP-correlator is enabled and the output is read as a differential voltage between the left and right arms. Finally, the differential output is mapped to correlation output as $R_{MP}=G_{MP}^{-1} (V_s^{+}-V_s^{-})$.

\section{Results}\label{sec2}
\subsection {Measured MP-ensemble Dynamics}
\label{subsec: MP characterization results}
Fig.~\ref{fig:timing diagram}a shows the timing diagram with separate phases when the input signals are sampled and then correlation is computed. Note that even though the compute phase depicts both the transient region and the steady-state region, the correlation can also be estimated during the sampling (due to device leakage). Conventionally, the outputs of computations are sensed during the quasi-steady-state, resulting in a total compute time $t >$100\,ns and a compute core efficiency of $<$1000\,TOPS/W.
While a closed form expression for~\ref{eqn:hardware_dynamics} is not derived here, numerical studies described in the  Supplementary section show that the differential output, $V_{out}(t)=V_s^+(t)-V_s^-(t) \propto R_{MP}(1-e^{-t/\tau_{out}})$, is proportional to the true correlation between the input signals and evolves as an exponential with a system time constant, $\tau_{out}$. The associated system time constant, $\tau_{out}$, is directly proportional to the aggregate parasitic capacitance at the output node that increases with the correlator length, and inversely proportional to the current consumption of the MP core. 

\begin{figure}[htbp]
  \centering
\includegraphics[width=1.0\textwidth]
{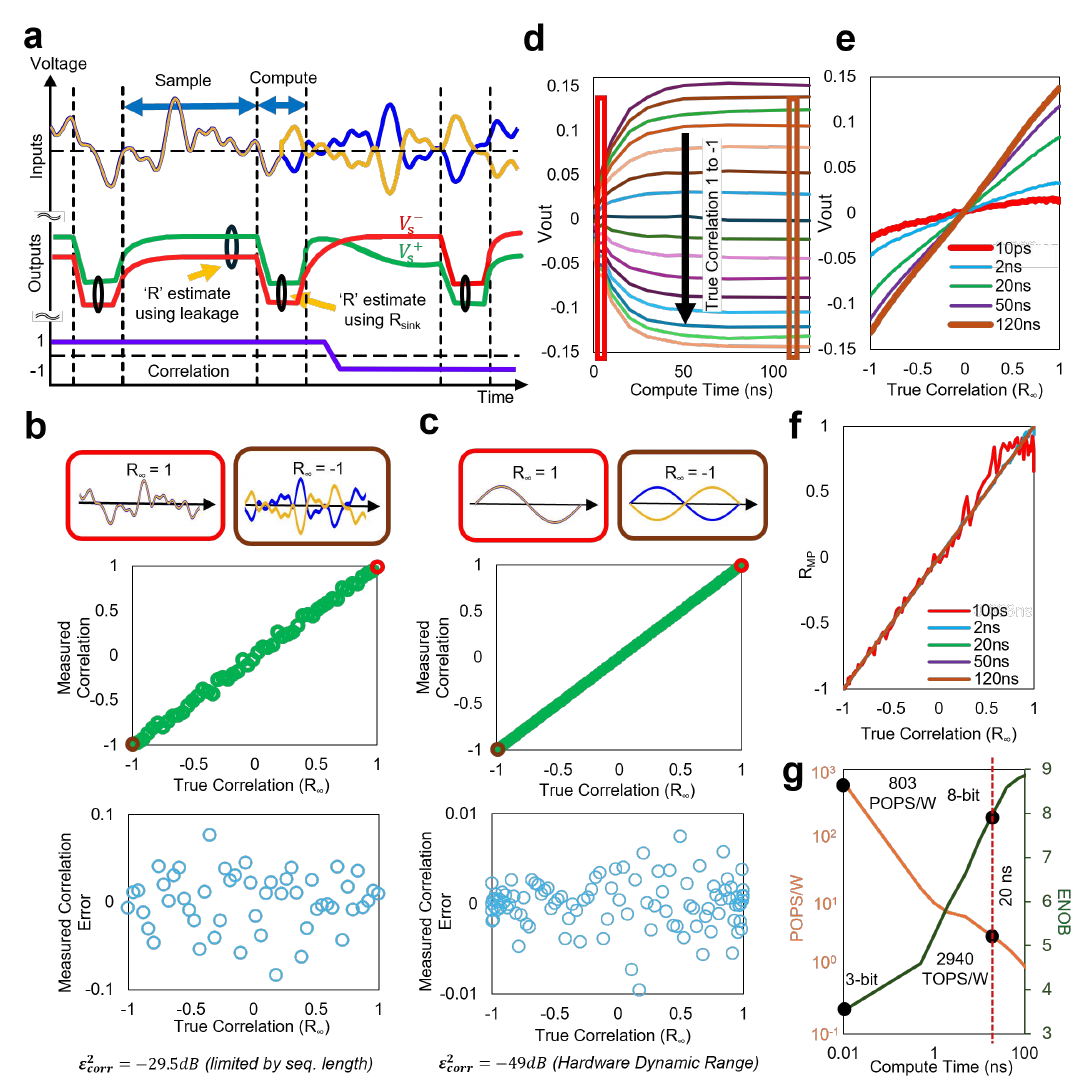}
  \caption{(a) Timing diagram illustrating MP correlator's operation, where sampling phase is followed by a phase to compute correlations. Correlation can also be estimated during the sampling phase, though with higher error due to a smaller number of sampled values.
  (b) Measured correlation error versus varying correlation between two random input vectors. The RMS error is –29.5 dB, closely approaching the theoretical limit of 
$1/\sqrt{1024}$.
(c) Measured correlation and corresponding error for periodic input signals. The use of deterministic inputs removes statistical uncertainty, isolating hardware-induced errors and resulting in an RMS error of –49 dB—comparable to an 8-bit digital multiplier. 
(d) Transient correlation response during the compute phase as the correlation between the two inputs is varied from -1 to +1.
(e) Output of the MP-correlator across varying compute time instances.
(f) Predicted correlation ($R_{MP}$) versus true correlation ($R_{\infty}$) across compute time after the compute-time dependent one-to-one mapping function.
(g) Estimated POPS/W and Effective Number of Bits (ENOB) across varying compute time, highlighting the energy efficiency and precision trade-off.
}\label{fig:timing diagram}
\end{figure}

This is also validated by hardware measurements. We characterized the error profiles of the implemented MP correlator at its steady state using pseudo-random inputs and periodic inputs across varying input correlations with the length-1024 correlator. The measured correlation between the input signals and estimation errors at compute time $t$=128\,ns are presented in Fig.~\ref{fig:timing diagram}b and \ref{fig:timing diagram}c. It is shown that for both inputs, the MP correlator generates monotonic responses to input correlations. The measured signal processing gain is 29.5\, and the effective compute precision is $>$8 bits. 

Fig.~\ref{fig:timing diagram}d-g depicts the time-domain evolution of the differential output across varying true correlation of periodic inputs for the length-256 MP correlator, and we measured the $\tau_{out}$ to be 20\,ns. As demonstrated through measured results in Fig.~\ref{fig:timing diagram}d-f and theoretical analysis in the supplementary material, it is not necessary to wait for complete steady-state convergence, and an accurate correlation prediction can be made by sensing the MP output during the transient phase. A reduction in compute time—by exploiting the transient phase sensing, translates directly to reduced energy consumption per correlation computation, given that energy scales with the product of power and compute duration. Nevertheless, this energy-efficiency gain introduces a trade-off: as compute time decreases, the output voltage amplitude diminishes, thereby degrading the effective compute precision (see Fig.~\ref{fig:timing diagram}g). Operating with a compute time equal to the system time-constant (20\,ns) resulted in high compute precision of 8-bit while achieving compute efficiency of 1793\,TOPS/W at the system level and
2940\,TOPS/W in the 256-length MP compute core. Notably, the 256-length system can operate with computation time of 10\,ps, achieving 0.803 Exascale Operations per Second per Watt (ExaOPS/W) with $>$3-bit precision. The method for estimating the computing power and energy is described in the Methods section~\ref{subsec:power computation}.

\begin{figure}[htbp]
  \centering
\includegraphics[width=1.0\textwidth]{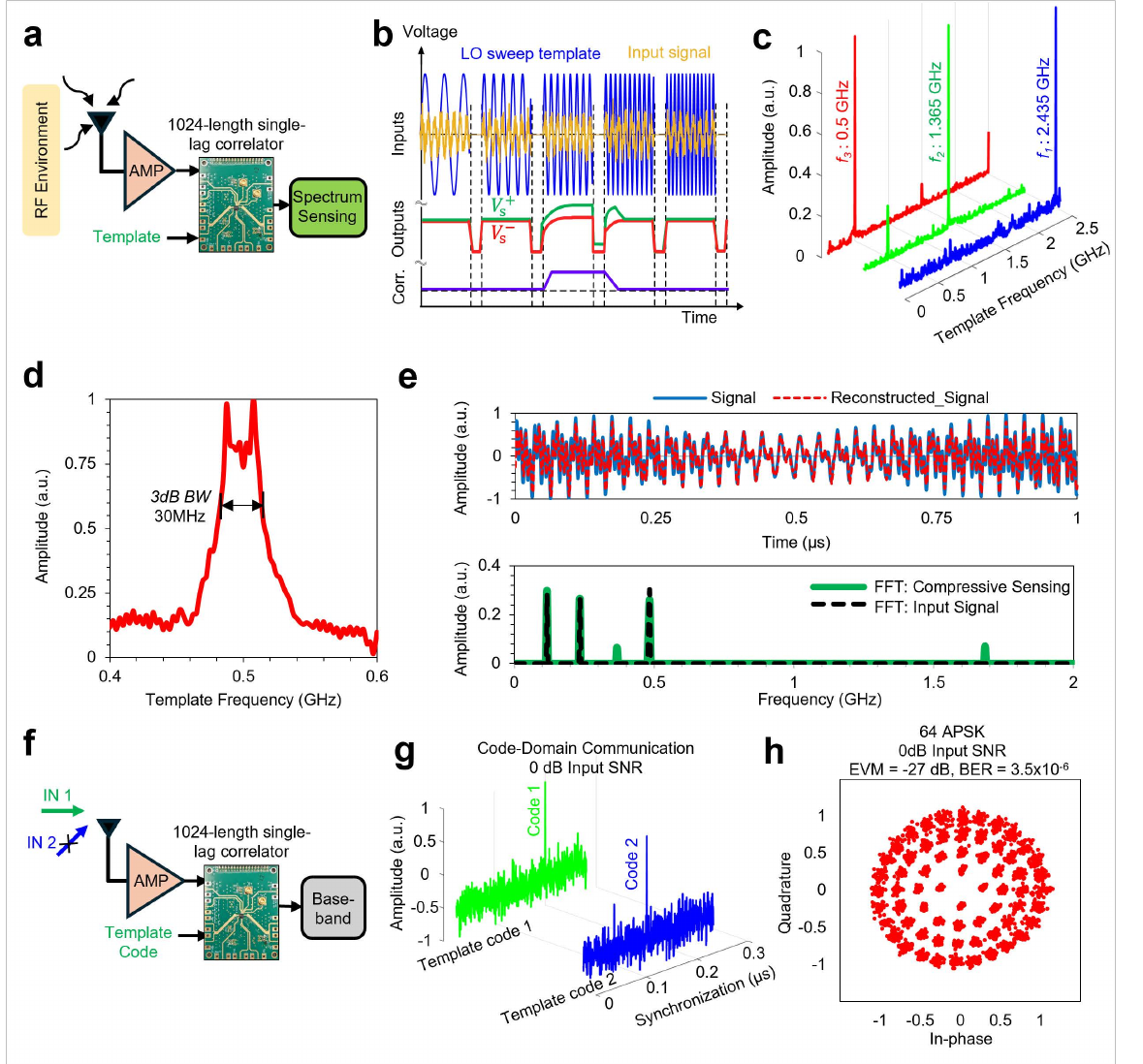}
  \caption{\textbf{Demonstration and evaluation of RF correlator IC in spectrum sensing and code-domain communication tasks.}
(a) Experimental setup for spectrum sensing using the 1024-length single-lag correlator.
(b) Timing diagram illustrating the operation of the spectrum sensing system.
(c) Measured correlation output demonstrating detection of input signal frequency.
(d) Measured output revealing the occupied bandwidth of the input signal.
(e) Compressive sensing-based reconstruction: comparison between the original signal, reconstructed waveform, and its frequency spectrum.
(f) Experimental setup for code-domain communication using pseudo-random spreading codes.
(g) Measured correlation output demonstrating selectivity to matched spreading codes.
(h) System performance in a 64-APSK (Amplitude and Phase Shift Keying) modulation scheme, achieving an error vector magnitude (EVM) of –27 dB which translates to $3.5\times10^{-6}$ BER.}
  \label{fig4}
\end{figure}
\subsection{Direct-RF Spectrum Sensing}

Joint communication and sensing is a key component of the emerging 6G wireless infrastructure~\cite{Shi_CommEngg2024,Chen_NatComm2025}. Continuous, low-power spectrum sensing is essential for efficiently utilizing the spectrum by detecting unused spectrum holes across wide frequency bands. These spectrum gaps can be opportunistically accessed by secondary users without compromising the fidelity of primary user communication. Existing spectrum sensors are primarily based on energy detection, which offers low detection accuracy, or cyclo-stationary feature extraction, which provides better sensitivity but at the cost of significant power consumption~\cite{Haykin_ProcIEE2009,Entesari_MicroMag2019}. In contrast, MP-correlator-based spectrum sensing offers a unique advantage by achieving high-fidelity detection while maintaining low power consumption.

The MP-correlator functions as a wideband spectrum sensor as demonstrated in Fig.~\ref{fig4}. When operating at a sampling rate of 5\,GS/s, it can scan the entire spectrum from DC to 2.5\,GHz with a resolution of 2.5\,MHz, enabling the frequency detection of an unknown input. In this experiment, the correlator input $X$ represents the sensor input, while input $Y$ serves as the template— a sinusoid with a varying frequency (see Fig.~\ref{fig4}b). As the template frequency is swept from DC to 2.5\,GHz, the correlator produces a high correlation at frequency bins corresponding to the input frequency. In Fig.~\ref{fig4}c, the MP-correlator identified three different input signal frequencies at 0.5\,GHz, 1.365\,GHz and 2.435\,GHz, respectively. Fig.~\ref{fig4}d demonstrates the spectrum sensing capability for a modulated input signal, where both the signal's bandwidth and center frequency can be accurately estimated using the MP-correlator based spectrum sensor.

In a scenario where the input signal is spectrally sparse, the total number of required correlations can be reduced by leveraging compressive sensing techniques~\cite{Arianada_TSP2012}. In this methodology, the spectrally sparse input is correlated with K=128 pseudo-random templates, where K is significantly smaller than the total number of 1024 spectral bins. The correlation outputs from these templates are processed with a CoSAMP recovery argorithm~\cite{Needell_ACM2010}, to estimate the spectral occupancy of the input signal. As depicted in Fig.~\ref{fig4}e, the predicted time domain signal and the frequency spectrum align closely with the ground truth, representing a strong use case of the MP-correlators in compressive sensing-based RF spectrum detection applications. 

\subsection{Code-Modulated Spread-Spectrum Communication}
\label{sec:xcorr}
Spread-spectrum communication used for low-SNR communications~\cite{Pickholtz_TComm1982} is another scenario where MP correlators can find applications (see Fig.~\ref{fig4}f), including: (i) code-synchronization, and (ii) code despreading and data reception. Fig.~\ref{fig4}g describes the capability of the MP-correlator to synchronize with a 0dB SNR, input signal that is code-modulated using a 5 Giga-chip/sec, pseudo-random code while rejecting the signal from the unmatched code. In Fig.~\ref{fig4}h, a code despreading application is demonstrated by employing the MP correlator in a coherent 64-APSK signal down-conversion and demodulation. A 64-APSK, 5Giga-chip/sec code-modulated, 2\,GHz carrier signal with low SNR of 0\,dB, is demodulated by correlating with the in-phase (I) and quadrature (Q) carrier templates. An error vector magnitude (EVM) of -27\,dB is measured matching the theoretical estimates when using a 1024-length code, and this translates to a Bit Error Rate (BER) lower than $3.5\times10^{-6}$. These measurements illustrate the capability of MP-correlators in code synchronization and signal demodulation.

\section{Discussions}\label{sec3}
In this paper, we proposed and demonstrated the feasibility of a maximum-entropy-based analog computing paradigm that can push the limits of energy efficiency and computational throughput. At the core of this approach is a margin propagation (MP) framework which can be viewed as a physical manifestation of the generalized maximum entropy principle and can be applied to different analog device ensembles and emerging compute-in-memory architectures. In this work the energy barriers formed by transistors were considered to be mesostates that are coupled together by interconnects and we showed that MP dynamics evolve towards a maximum-entropy configuration in the presence of rectangular symmetry and energy conservation constraints. The desired output which is correlation is a macroscopic property of the ensemble, just like pressure and temperature in kinetic theory of gasses~\cite{JaynesPhysRev1957}. This physics-based computation not only leads to high energy-efficiency and computational throughput, but the statistical approach for analog computing makes the paradigm robust to noise, environmental fluctuations, and device parasitics. Furthermore, we also show that the computing paradigm holds far-from-equilibrium or transient conditions.

In traditional analog computing architectures, transients are generally avoided due to: (a) artifacts introduced by parasitic elements and the mismatch; and (b) the physical computation that is implemented by the array during the transient phase is difficult to model. The proposed maximum-entropy and MP computing is robust to the choice of the non-linear function $h(.)$, as long as it satisfies very general properties~\cite{Zhili_TCAS2024} and can be implemented using different devices other than just transistors. Furthermore, the generic properties of $h(.)$ holds during the transients. For instance, during the transient state, the current flowing through the transistor is not only a function of the steady-state drain current, but is also a function of current arising due to the redistribution of transistor channel charge and the displacement current flowing through the interconnects and parasitics. The invariance of the property of $h(.)$ in steady and transient states is the main reason why the proposed approach can not only achieve high computational throughput and energy-efficiency but is also robust to parasitics. Another source of robustness of the proposed approach to device mismatch is the use of rectangular symmetry when applying differential inputs and during device layout, combining the outputs of the array. This type of symmetry is routinely exploited in designing current mirrors that are robust to fabrication artifacts and to PVT variations. In Supplementary material, we provide additional evidence of robustness using measured data. 

We measured the performance of MP-correlators in two different lengths (256 and 1024). The trade-off between energy-efficiency and computational precision with respect to the measurement time is shown in Fig.~\ref{fig:timing diagram}d-g for a 256-length correlator. The 22\,nm SOI CMOS implementation resulted in a system time constant $\tau_{out}$=20\,ns. When the compute time is lower than $\tau_{out}$, the low output swing (see Fig.~\ref{fig:timing diagram}e) results in compute precision $<$8-bit ENOB (see Fig.~\ref{fig:timing diagram}f and g). A compute time of 10\,ps achieves $>$3-bit ENOB and 803\,POPS/W efficiency. However, as can be seen in Fig.~\ref{fig:timing diagram}e-f, for the 10\,ps measurement, the effective precision of the correlator (post-calibration) is asymmetric about the origin, with a higher effective precision when the input correlation is within -1 to 0. This asymmetry could be attributed to the device layout, mismatch, and parasitics, and raises the possibility that the precision could be further improved in future generations of this design. However, for the current implementation, a compute time of 20\,ns results provides an optimal trade-off with 8-bit ENOB and 2.9 POPS/W efficiency.

\begin{figure}[H]
  \centering
\includegraphics[width=1.0\textwidth]{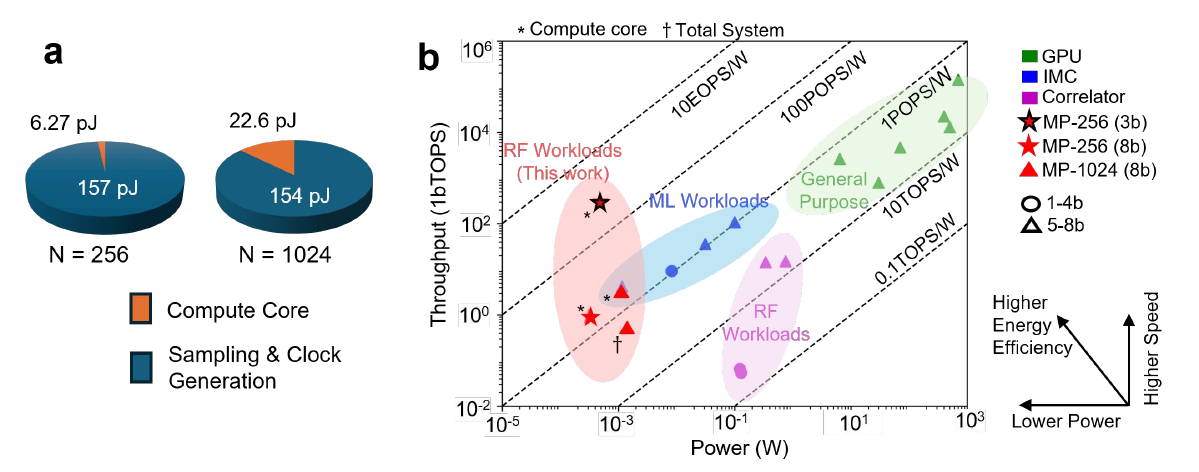}
  \caption{\textbf{Performance of Direct RF-Sampling MP Analog Correlator}. 
 (a) Pie chart of energy consumption per frame at different sampling rates for the 256-length with compute time of \SI{20}{\nano\second}, \SI{2800}{\ohm} of $R_\text{sink}$, \SI{0.5}{\volt} supply and the 1024-length MP correlator with compute time of \SI{20}{\nano\second}, \SI{400}{\ohm} of $R_\text{sink}$, \SI{0.8}{\volt} supply to compute core. Both samplers have \SI{0.8}{\volt} power supply.
 (b) Comparison with other inner-product and correlator hardware platforms. The proposed dynamic MP approach results in superior compute core efficiency, exceeding 800\,POPS/W in 3-bit precision and 2900\,TOPS/W in 8-bit precision for length 256 at 4GS/s, and 2885\,TOPS/W compute core efficiency, 370\,TOPS/W at the system level in 8-bit precision for length 1024, while directly sampling the RF data at 10GS/s. 
  }\label{fig:performance}
\end{figure}

The omission of the high-speed digital compute and analog-to-digital conversion of the proposed approach has resulted in significant energy savings as depicted in Fig.~\ref{fig:performance}a for the MP-correlators. This correlator IC demonstrates high computing efficiencies across a range of sampling rates. For the 256 length MP correlator, the power consumption during the sampling phase ranges from 1.2 to 4.9\,mW at sampling rates between 1 and 4\,GS/s. The compute phase consumes \SI{6.27}{\pico\joule} for compute time of 20\,ns with 8-bit ENOB, resulting in a compute-core efficiency of 2940\,TOPS/W, as shown in Fig.~\ref{fig:timing diagram}g. As for the 1024-length MP correlator, the IC consumes 0.16–1.5\,mW for sampling rates ranging from 1 to 10\,GS/s during the sampling phase. In the compute phase—during which the sampler is disabled—the power consumption is 1.2\,mW. For a compute time of 20\,ns, each correlation operation performs the equivalent of 65.3K binary operations and consumes 0.18\,nJ of energy. This corresponds to a system-level compute efficiency of 370\,TOPS/W, and the standalone MP compute core compute efficiency is 2885\,TOPS/W. 
The sampler clock generation limits the system efficiency of the correlator, and the overall system efficiency can be improved by sharing the sampled voltages across multiple correlators, as reported in~\cite{Aswin_ISSCC2025}. 

In Fig.~\ref{fig:performance}b, we compare the energy-efficiency and computational throughput of MP-based correlators with digital platforms such as the NVIDIA H100, A100, Jetson, Tesla T4, AMD MI250, and Qualcomm Snapdragon 8 Gen 3 \cite{nvidia_H100,nvidia_a100,nvidia_jetson,nvidia_tesla_t4,amd_mi250,qualcomm_snapdragon8gen3}. These digital architectures offer high computational throughput but at the cost of lower energy-efficiency, reflecting the trade-off between the two performance metrics.  In-memory compute systems \cite{wang2022dimc,he2023_isscc,jia2021_jssc,wang2023_jssc} perform computations in memory units itself thus saving power and time on massive data transfers between memory and compute processors thus achieving higher computing efficiencies compared to conventional Von-Neumann digital architectures. While these parallel in-memory compute architectures are ideal for audio/video processing and recognition systems, they are limited by low IO speeds, making them not suitable for high-speed RF applications. MP-correlators on the other hand can achieve high computational throughput (including high I/O speeds) while maintaining better compute efficiencies. More information about this is provided in the supplementary section.

When compared to other correlator archtiectures, shown in Fig.~\ref{fig:performance}b, previous analog and RF-domain correlators \cite{Kareem_JSSC2024,javed2016linear,Tang_IMS2019,Giannino_ISSCC2019,Guermandi_JSSC2017} suffer from low throughput and relatively high power consumption, often constrained by limited scalability and analog precision. Prior MP-based correlators are implemented in the charge domain~\cite{Kareem_JSSC2024}, while efficient than integrator-based analog correlators~\cite{Tang_IMS2019,Tang_TCAS2021}, the implementation of the operand generation through capacitor stacking limits the unit cell area to 220\,$\mu m^2$ and limited the compute efficiency of the MP compute core to 150\,TOPS/W. The proposed concept with inherent operand generation overcomes these limitations and thus results in high-speed analog cross-correlators over several GHz bandwidth, achieving orders of magnitude improvements in both power and area efficiency. 

The demonstrated near ExaOPS-per-watt energy-efficiency metric is equivalent to dissipating approximately 100kT of energy per bit. Even though this metric is three orders of magnitude larger than the Landauer energy dissipation limit~\cite{Landauer1961EntropyProduction}, it is in the same order as thermodynamic computing paradigms based on stochastic dynamics~\cite{MelansonNatureCommThermoAIComputing2025}, DNA or other molecules~\cite{KempesBioComputingEfficiency2017,KonopikFiniteTimeParallelCompute2023} and adiabatic computers~\cite{FrankReverseComputing2005, GertAdiabatic2007, BarendsAdiabaticQuantumNature2016}. However, compared to these thermodynamic computing architectures, our approach can achieve computational throughput (operations per second) that is orders of magnitude higher. We attribute this advancement to a better matching between the physics of computation and the physics of the underlying hardware substrate. In this proof-of-concept implementation, we demonstrated near ExaOPS-per-watt efficiency at only three bits of precision. However, we believe that the precision can be significantly improved by a better device layout that maintains the symmetry needed for MP-based computation and by using a more precise read-out mechanism. This potential can be seen from the measurements in Fig.~\ref{fig:timing diagram}d-e, where for the read-out time of 10ps, the precision of the measured correlation is not symmetric. Another attribute of the proposed computing paradigm is that the output of the system is a non-linear but monotonic function $G(.)$ of the correlation (or inner-product). While in this work we calibrate using $G^{-1}(.)$ to estimate the correlation, the nonlinearity could directly be used as an activation function for neural network implementations. Exploiting and designing $G(.)$ for specific ML tasks will be the focus of future work.

In summary, this work not only introduces a new paradigm for high-speed analog computation but also sets the stage for future developments in ultra-low-power, high-performance signal processing for applications at the RF edge. By leveraging the statistical and collective nature of analog domain computing and the MP paradigm, this approach can unlock new possibilities for the deployment of low-cost, high-efficiency devices in areas ranging from next-generation wireless networks to advanced sensing technologies. Interesting future research could include scaling the MP correlator design to even larger correlation lengths to increase processing gain, reduce the unit cell size, and explore its potential in other domains such as optical communications, LiDARs, and matrix-vector multipliers targeting machine learning and AI hardware.

\section{Methods}\label{sec4}

\subsection{MP-computing as a Maximum-entropy configuration}\label{subsec:Maximum entropy simulation}
The maximization of entropy $S_{\eta}^+$ and $S_{\eta}^-$ in~\ref{eqn:q-functional} with the normalization of the probability distribution and the energy ensemble constraints in~\ref{eqn:ensemble constraint} and with Lagrange multipliers $\alpha$ and $\beta$ respectively yields
\begin{subequations}
\begin{align}
    \frac{d}{dp_i^{\pm}}\left(S_{\eta}^{\pm} + \alpha^{\pm}\sum{(p_i^{\pm} + q_i^{\pm})}+\beta^{\pm}\sum{(x_i\pm y_i)(p_i^{\pm} - q_i^{\pm})}\right) = 0, \\
    \frac{d}{dq_i^{\pm}}\left(S_{\eta}^{\pm} + \alpha^{\pm}\sum{(p_i^{\pm} + q_i^{\pm})}+\beta^{\pm}\sum{(x_i\pm y_i)(p_i^{\pm} - q_i^{\pm})}\right) = 0,
\end{align}
\end{subequations}
which gives follows ~\cite{Wada2005Nonselfreferential}
\begin{subequations} \label{eqn:steady state entropy}
\begin{align} 
         p_i^+ =  \left[\frac{(\eta-1)(\alpha^+ + \beta^+(x_i + y_i))}{\eta} \right]_+^{\frac{1}{\eta-1}}, \\
         q_i^+ =  \left[ \frac{(\eta-1)(\alpha^+ - \beta^+(x_i + y_i))}{\eta} \right]_+^{\frac{1}{\eta-1}},\\
        p_i^- =  \left[\frac{(\eta-1)(\alpha^- + \beta^-(x_i - y_i))}{\eta} \right]_+^{\frac{1}{\eta-1}}, \\
         q_i^- =  \left[ \frac{(\eta-1)(\alpha^- - \beta^-(x_i - y_i))}{\eta} \right]_+^{\frac{1}{\eta-1}},
\end{align}        
\end{subequations}
where $[\cdot]_+ = \text{max}(0,.)$ ensures the positivity of the distributions. For $1 <\eta \le 2$, the distribution is a non-linear, monotonically increasing, convex function with respect to the input, which yields the formulation of a generalized MP computing by rewriting the normalization constraint as below, \begin{subequations}\label{eqn:fromqtoh}
    \begin{align}
\sum_i\left[\alpha^+/\beta^+   + (x_i+y_i) \right]_+^{\frac{1}{\eta-1}} + \left[\alpha^+/\beta^+ -  (x_i+y_i) \right]_+^{\frac{1}{\eta-1}} = \gamma, \\
        \sum_i\left[\alpha^-/\beta^-   + (x_i-y_i) \right]_+^{\frac{1}{\eta-1}} + \left[\alpha^-/\beta^- -  (x_i-y_i) \right]_+^{\frac{1}{\eta-1}} = \gamma.
    \end{align}
\end{subequations}
The form in equation~\ref{eqn:fromqtoh} is equivalent to the equation~\ref{eqn:general MP}. To determine the equivalent normalization factor $\gamma$ and output thresholds $z^+$ and $z-$ in~\ref{eqn:general MP}, we get $\gamma:= ((\eta-1)|\beta|/\eta)^{\frac{1}{1-\eta}}$ from~\ref{eqn:fromqtoh}, and the output thresholds $z^+$ and $z^-$ can be subsequently determined as $-\alpha/\beta$.

\subsection{Signal Processing Gain vs Correlation length} \label{subsec:Signal Processing Gain & calibration}
Fig.~\ref{fig:entropy_fig1}h presents the simulated signal processing gain (SPG) in decibels (dB) as a function of correlation length for both the MAC-based correlation and MP-based correlation approaches at different time steps in its dynamics ($t=2$, 10, and 80). The MAC-based correlator employs full-precision inner product computations. The MP-based correlator replaces the nonlinear function h(.) in Eqn.~\ref{eqn:general MP} with a linear-rectifying function, effectively implementing the Tsallis entropy configuration with $\eta = 2$. Although both architectures produce outputs that exhibit monotonic behavior with respect to the true input correlation, accurate recovery of the correlation values from the MP correlator requires a post-calibration step. In this calibration, a fifth-order polynomial regression model is trained on 500 Gaussian distributed pseudo-random input vectors and evaluated on an additional 1,000 test cases to establish a one-to-one mapping from the correlator's output to the true correlation, and plotted values are the mean of SPG values obtained for test cases. Further details regarding the numerical simulation for the MP dynamics and the calibration methodology are provided in the Supplementary material.

Both the MAC and MP approaches demonstrate a 3 dB improvement in SPG for each doubling of correlation length, consistent with theoretical expectations. These results confirm that the MP-based correlator, despite its approximate nature, can reliably estimate correlation values with better accuracy compared to full-precision methods.

\subsection{Estimation of Power and Energy Dissipation}\label{subsec:power computation}
Since the DC supply can only record average current consumption $I_{static}$ after the correlator output reaches a steady state, we resorted to a conservative approach to estimate the transient power consumption based on the circuit model in~\ref{eqn:hardware_dynamics}. This has also been validated by the numerical and behavioral simulations shown in the Supplementary section. From the measurements, the time constant $\tau = R_{sink}C_{par}$ can be estimated to compute the parasitic capacitance $C_{par}$. Then, the transient current can be computed as follow
\begin{equation}
    I(t) = I_{\text{static}} + C_{par}\frac{dV_{out}}{dt}.
\end{equation}
Note that this $I(t)$ is a conservative estimate and upper bounds the actual current for two reasons: First, the $C_{par}$ estimated here includes the parasitic capacitance of the measuring instrument; Second, $I_{\text{static}}$ is larger than the actual current going through the resistor path, because the steady-state $Vout$ is larger than transient $V_{out}(t)$, and $I_{\text{static}}$ also includes currents through other leakage paths. From $I(t)$, we can compute the transient power for the compute core by $P(t) = I(t)V_{dd}$, and then integrate the transient power over the compute time duration to estimate the energy consumption by the compute core.

\subsection{1024-length correlator Implementation} 

The RF correlator consists of two primary blocks: (1) a sampler that sequentially stores input samples onto 1024 capacitors, and (2) a correlation compute engine that estimates the correlation between the input signals. To minimize parasitic loading on the RF input node, the 1024-phase sampler adopts a two-layer architecture. In the first layer, the RF input is sampled by 32 switches driven by 32-phase non-overlapping clocks with a period of 32$T_s$ and pulsewidth equal to the sampling period, $T_s$. Each output from this first layer is further sampled in the second layer using 32 additional switches, controlled by another set of 32-phase non-overlapping clocks with a longer period of 1024$T_s$ and pulse width of 32$T_s$. This architecture results in an effective ON resistance of 600\,$\Omega$ with a 20\,fF sampling capacitor co-located with the MP-unit cells as shown in Fig.~\ref{fig:CMOS_schematic}c. This hierarchical clocking scheme enables power-efficient 1024-phase sampling using only 32 high-speed switches and 1024 low-speed switches, reducing both RF input loading and complexity in clock generation. The resulting 1024 samples of the input signals, $x(t)$ and $y(t)$, are routed to 1024 MP unit cells designed using 500\,nm/70\,nm NMOS transistors with isolated backgates. The current sink is implemented using a tunable resistor from 500\,$\Omega$-2k\,$\Omega$. The differential output from the MP array is buffered using a 50\,$\Omega$ voltage buffer to enable voltage readout.

\subsection{Measurement Setup}
\label{subsec:Measurement_Setup}
The RF correlator ICs were evaluated under a variety of experimental scenarios to characterize their performance in both controlled and real-world conditions. A key metric of interest is the hardware dynamic range (HDR), which directly relates to the effective computational precision of the IC. In correlation-based computation, two primary sources of error must be considered: (1) finite sample length effects, $\epsilon_{len}$, and (2) intrinsic hardware computation errors $\epsilon_{HDR}$. When correlating pure sinusoidal signals, the contribution from finite-length effects becomes negligible, thereby isolating errors attributable solely to hardware non-idealities.

For correlator characterization, two inputs were generated using a Tabor Electronics Proteous Arbitrary Waveform Generator (Arb), P9484D, while a Keysight analog signal generator, E8257, provided the local oscillator (LO) for sampling clock generation. A correlation measurement consists of two primary sources of error: (1) finite sample length effects, $\epsilon_{len}\approx 1/\sqrt{N}$ with $\epsilon_{len}\rightarrow 0$ for larger sequences, and (2) intrinsic hardware computation errors that include MP-approximation and noise/mismatch, $\epsilon_{HW}$. For real-world scenarios, it is required to have $\epsilon_{HW}<<\epsilon_{len}$. The correlator was measured by injecting two analog vectors $X=S_1$ and $Y=R_{\infty} S_1+\sqrt{1-R_{\infty}^2}S_2$ with varying correlations $R_{\infty}$, where $S_1$ and $S_2$ are uncorrelated analog random vectors. The measured correlation is calculated from the output voltage using a pre-trained 5$^{th}$-order polynomial function, $R_{MP}=G^{-1}(V_{out})$. The signal processing gain ($SPG=1/\epsilon_{len}^2+1/\epsilon_{HW}^2$) of the correlators is calculated from the RMS error between the measured correlation $R_{MP}$ and the true correlation $R_{\infty}$ and has been shown to approach the theoretical limit (see Fig.~\ref{fig:timing diagram}b). The hardware dynamic range (a.k.a. compute precision) $=1/\epsilon_{HW}^2$, was measured by choosing the $S_1$ and $S_2$ as a cosine and a sinusoid, effectively making $\epsilon_{len}=0$ (see Fig.~\ref{fig:timing diagram}c). A detailed description of the measurement setup is presented in the supplementary material.


\section*{Data Availability}\label{sec_data}

The data supporting the figures within this paper
are available from the corresponding authors on a reasonable request.

\section*{Code Availability}\label{sec_code}

The MATLAB codes used in simulation/emulation studies on the CPU and FPGA platforms are available from the corresponding authors upon reasonable request.

\section*{Acknowledgments}\label{sec_ack}
 This work was supported by the National Science Foundation grants CNS - FuSE2 - 2425444, ECCS - 2513548, CNS - 2128535, ECCS - 2332166 and FET -2208770 and by the DARPA MAX program, award no. FA8650-23-2-7309. The authors would like to thank GlobalFoundries University MPW program for fabrication support and Tabor Electronics for equipment support. 

\section*{Author Contributions}\label{sec_contributions}
All authors were involved in the design review of the IC prototype. A.U and A.Nag. were responsible for the simulations and layout of the IC. S.C. and X.Z. formulated the maximum-entropy principle for MP-computing. X.Z. conducted the behavioral simulations. A.U. and A.Nag. tested and characterized the fabricated IC. All authors/co-authors contributed to proof-reading and writing of the manuscript.  

\section*{Competing interests}\label{sec_COI}
S.C., A.Nag, and A.Nat. are named as inventors on U.S. patents related with MP-correlators, and the rights to the intellectual property are jointly owned by Washington University in St. Louis and Oregon State University. 

\bibliography{sn-bibliography}
 




\newpage
\setcounter{page}{1}
\section*{Supplementary Material: Maximum-Entropy Analog Computing Approaching ExaOPS-per-Watt Energy-efficiency at the RF-Edge} 
\setcounter{section}{0}
\section{Mathematical framework for estimating correlations}\label{secA1}
The true correlation $R_{\infty}$ between two real-valued random variables $X \in \mathbb{R}$, $Y \in  \mathbb{R}$ is generally defined as the expectation of their product $\mathcal{E}[XY]$ as below
\begin{equation}
    R_{\infty} \mathrel{:=} \mathcal{E}[XY] =  \int_{-\infty}^{\infty} \int_{-\infty}^{\infty} xy \enskip p(X=x,Y=y)dxdy,
    \label{eqn_defcorr}
\end{equation}
In practical scenarios, the joint distribution $p$ is not known a priori, so the expectation values are approximated using the inner product of $N$ independent pairs of samples $\{x_i, y_i\}^N_{i=1}$ drawn from the distribution $p$, yielding the empirical correlation:
\begin{equation} 
\hat{R}_N =\frac{1}{N}\boldsymbol{x}^T\boldsymbol{y} = \frac{1}{N}\sum_{i=1}^{N} x_i y_i,
\label{eqn:empcorr}
\end{equation}
where $\boldsymbol{x} \in \mathbb{R}^\text{N}$ and $\boldsymbol{y} \in \mathbb{R}^\text{N}$ denotes the vector form of input samples. 
The empirical correlation converges to the true correlation 
$R_{\infty}$ as sample size $N$ grows large, assuming finite second-order moments or variance $\mathcal{E}[(XY)^2]$,
\begin{equation}
\text{Var}\left(\frac{1}{N}\sum_{n=1}^{N} x_n y_n - R_{\infty}\right) \le \epsilon \stackrel{N \rightarrow \infty}{\longrightarrow} 0.
\label{eqn_empconverge}
\end{equation}
Conventional correlators use analog and digital MAC circuits to compute $\hat{R}_N$ in~(\ref{eqn:empcorr}). 
Rather than relying on discrete MAC operations, in prior work~\cite{Zhili_TCAS2024}, it was proposed to use a family of nonlinear functions called margin propagation (MP) that naturally emerge from physical substrates to compute correlations. In this work, we propose an alternative collective analog computing approach by extending the MP framework to vector functions. This new paradigm interprets the correlation operation as the steady-state response of a statistical ensemble governed by maximum entropy principles and symmetry constraints. In this setting, the entire physical substrates, including the computing arrays, interconnects, and parasitics, are treated as a monolithic computing unit of nonlinear functions 
$f:\mathbb{R}^\text{N} \rightarrow \mathbb{R}$. As the system evolves towards a high entropy state shaped by inputs $\boldsymbol{x},\boldsymbol{y}$, the expected macroscopic output is a monotonic function $G_{MP}:\mathbb{R} \rightarrow \mathbb{R}$ of the true correlation,
\begin{equation}
\mathcal{E}[f(\boldsymbol{x},\boldsymbol{y})] = G_{MP}(R_{\infty}),
\end{equation}
When inputs are drawn from stationary distributions, $G_{MP}$ can be analytically characterized or calibrated. The inverse $G_{MP}^{-1}$ enables accurate correlation inference from nonlinear statistics, often outperforming MAC-based estimates at finite $N$. As $N$ grows large, the signal processing gain reduces estimation error like the empirical approach as well
\begin{equation}
\text{Var}\left(G_{MP}^{-1}(f(\boldsymbol{x},\boldsymbol{y})) - R_{\infty}\right) \le \epsilon \stackrel{N \rightarrow \infty}{\longrightarrow} 0.
\end{equation}

In section~\ref{subsec:h existance}, we first describe the properties of the MP function $\phi: \mathbb{R}^\text{N} \xrightarrow{} \mathbb{R}$. Then in Subsection~\ref{subsec:lipschtz and convex}, we demonstrate that $\phi$ is a convex function when applied to symmetric operands. Finally, in Subsection~\ref{subsec:MP correlator}, we use MP function $\phi$ with symmetric operands $\boldsymbol{x}\pm\boldsymbol{y}$ and $\pm\boldsymbol{x}\mp\boldsymbol{y}$ to construct the MP-based correlator function $f:\mathbb{R}^\text{N} \rightarrow \mathbb{R}$.
Leveraging Price's Theorem~\cite{Price1, PriceExtension, CommentOnExtension}, we show that the expected output of MP correlators $f(\boldsymbol{x},\boldsymbol{y})$ is a monotonic function $G_{MP}(R_{\infty})$ of the input correlation $R_{\infty}$, enabling accurate calibration and inference of true correlation. 

Without any loss of generality, we assume $X$ and $Y$ are zero-mean and unit-variance random variables. The correlation $R_{\infty} = \mathcal{E}[XY]$ is equivalent to Pearson's correlation coefficient~\cite{pearson1895note} $R$ in this case. When $X,Y$ have nonzero means $\mu_X$ and $\mu_Y$, we can recenter them to obtain correlations
\begin{equation}
    \mathcal{E}[XY] = \mathcal{E}[(X-\mu_X)(Y-\mu_Y)] + \mu_X \mu_Y.
\end{equation}

\section{Margin Propagation function and properties}\label{subsec:h existance}
Given an input vector $O = \{o_i\}, i = 1,..,N$, the output of an MP function  $z =MP(O,\gamma)$ is the solution to the following equation
 \begin{equation}
    \sum_{i=1}^N h(o_{i} - z) = \gamma,
    \label{eqn:MPbasic}
\end{equation}
where $h(.)$ is a monotonically increasing function, and $\gamma > 0$ is a hyper-parameter. It can be easily shown that for a given input vector $O$ of dimension $N$ and a fixed constraint $\gamma$, MP defines a function $\phi: \mathbb{R}^\text{N} \xrightarrow{} \mathbb{R}$, such that $\phi(O)$ is unique.
\begin{lem}\label{lem1} 
For a given constraint $\gamma$ and input vector $O$, the output of MP function $z = \phi(O)$ is unique.
\end{lem}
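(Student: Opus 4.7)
The plan is to establish uniqueness by studying the single-variable function
\begin{equation}
F(z) \;=\; \sum_{i=1}^N h(o_i - z), \qquad z \in \mathbb{R},
\end{equation}
and showing that $F$ is strictly decreasing on the relevant range, so that the level-set equation $F(z) = \gamma$ can intersect the graph of $F$ in at most one point.

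First, I would record the elementary structural properties of $F$. Since $h$ is monotonically increasing, each summand $z \mapsto h(o_i - z)$ is (weakly) monotonically \emph{decreasing} in $z$, and hence so is the finite sum $F$. If one assumes $h$ is strictly increasing, the argument is immediate: a strict sum of decreasing terms is strictly decreasing, and any equation $F(z) = \gamma$ admits at most one solution. To accommodate the case of practical interest in the paper — for instance $h(x) = \max(0,x)$, which is only weakly monotonic — I would refine the argument by restricting attention to the ``active'' set $A = \{z : F(z) > 0\}$. On $A$, at least one index $i$ satisfies $o_i - z$ lying in the region where $h$ is strictly increasing, so any decrease in $z$ strictly increases that summand while leaving the others non-decreased. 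This yields strict monotonicity of $F$ on $A$, which is exactly the regime where $F(z) = \gamma$ can be satisfied for $\gamma > 0$.

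From strict monotonicity on the active set, uniqueness follows by contradiction: if $z_1 \ne z_2$ both satisfy $F(z_1) = F(z_2) = \gamma > 0$, then both lie in $A$, but strict monotonicity of $F|_A$ forces $F(z_1) \ne F(z_2)$, a contradiction. Hence $z = \phi(O)$ is unique whenever it exists, which is the claim of the lemma.

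The main obstacle, as I see it, is not the monotonicity argument itself but the careful handling of the non-strict regime of $h$. One has to isolate the positive level set of $F$ from the region where $F$ is identically zero (or constant), and argue that the constraint $\gamma > 0$ rules out ambiguity on the flat part. A minor secondary point is that the lemma as stated concerns only uniqueness, so I would not need to invoke continuity or an intermediate value argument for existence; those would belong to a companion existence result establishing when $\gamma$ lies in the range of $F$.
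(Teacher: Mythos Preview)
Your argument is correct and is essentially the same as the paper's: both prove uniqueness by contradiction, using that $z \mapsto \sum_i h(o_i - z)$ is (strictly) decreasing so two distinct $z_1 < z_2$ cannot both hit the same level $\gamma$. Your treatment is in fact more careful than the paper's, which simply asserts the strict inequality without addressing the weakly monotonic case (e.g.\ $h(x)=\max(0,x)$); your restriction to the active set $A=\{z:F(z)>0\}$ and use of $\gamma>0$ cleanly handles that gap.
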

\begin{proof}
We prove this by contradiction. Suppose $\phi({O}) = z_1 $ and $z_2$, and $z_1 < z_2$, because function $\phi$ is increasing monotonically, 
 \begin{equation}
    \sum_{i=1}^M h(o_{i} - z_1) > \sum_{i=1}^M h(o_{i} - z_2),
\end{equation}
which means the two solutions cannot satisfy the constraint  $\gamma$ at the same time. Therefore, the function of $\phi({O})$ is well defined.
\end{proof}
\subsection{MP Functions with Symmetric Operands} \label{subsec:lipschtz and convex}
To construct an MP-based correlator, the monotonic function $h$ needs to be convex, and the operands $O = [o_i, -o_i]$ as the input vector are symmetrically constructed. This construction makes the MP function an L-Lipschitz continuous convex function.

The following lemma proves the convexity of symmetrically constructed MP functions. 
\begin{lem}\label{lem3} 
With a monotonically increasing convex function $h(.)$, symmetric operands $O = [o_i, -o_i]$ and constraint $\gamma$, the MP function $z = \phi(O)$ is convex.
\end{lem}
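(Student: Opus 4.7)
The plan is to prove convexity directly from the definition, sidestepping any Hessian computation via the implicit function theorem by exploiting the monotonicity of the defining equation in $z$. Fix $O^{(1)}, O^{(2)} \in \mathbb{R}^N$ and $\lambda \in [0,1]$, let $z_1 = \phi(O^{(1)})$ and $z_2 = \phi(O^{(2)})$, and set $O^\star = \lambda O^{(1)} + (1-\lambda) O^{(2)}$ and $z^\star = \lambda z_1 + (1-\lambda) z_2$. The goal then reduces to establishing the single inequality $\phi(O^\star) \le z^\star$.

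To exploit the implicit definition, introduce the auxiliary function $G(O,z) := \sum_{i=1}^N \bigl[h(o_i - z) + h(-o_i - z)\bigr]$. Since $h$ is monotonically increasing, $G(O,\cdot)$ is strictly decreasing in $z$, and by Lemma~\ref{lem1} the value $\phi(O^\star)$ is the unique root of $G(O^\star, z) = \gamma$. Hence $\phi(O^\star) \le z^\star$ is equivalent to the scalar inequality $G(O^\star, z^\star) \le \gamma$, which is the form I would actually verify.

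The verification uses convexity of $h$ coordinatewise. For each $i$, both arguments of $h$ evaluated at $(O^\star, z^\star)$ are affine combinations of the arguments at $(O^{(1)}, z_1)$ and $(O^{(2)}, z_2)$: namely $o_i^\star - z^\star = \lambda(o_i^{(1)} - z_1) + (1-\lambda)(o_i^{(2)} - z_2)$, and analogously $-o_i^\star - z^\star = \lambda(-o_i^{(1)} - z_1) + (1-\lambda)(-o_i^{(2)} - z_2)$. Applying convexity of $h$ to each pair and summing over $i$ yields $G(O^\star, z^\star) \le \lambda G(O^{(1)}, z_1) + (1-\lambda) G(O^{(2)}, z_2) = \gamma$, which closes the argument.

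The main hazard I anticipate is bookkeeping the direction of the implication: because $G(O,\cdot)$ is \emph{decreasing} in $z$, an upper bound on $G(O^\star, z^\star)$ translates into an upper bound on $\phi(O^\star)$, not a lower bound, and flipping that once would invert the inequality. Beyond that, the argument is a one-line application of the convexity of $h$, and the symmetric operand structure $[o_i,-o_i]$ plays no special role beyond ensuring that both halves of each summand are themselves convex in the coordinates of $O$.
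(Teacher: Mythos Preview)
Your proof is correct and takes a genuinely different route from the paper. The paper proceeds by computing the Hessian of $\phi$ explicitly via the implicit function theorem: it derives closed-form expressions for $\partial z/\partial o_j$ and $\partial^2 z/\partial o_j \partial o_k$ in terms of $h'$ and $h''$, and then shows $\boldsymbol{x}^T \nabla^2\phi(O)\,\boldsymbol{x} \ge 0$ by completing the square and invoking a Cauchy--Schwarz/triangle-inequality estimate. Your argument bypasses all of this by observing that $G(O,z)$ is jointly convex in $(O,z)$ (as a sum of a convex $h$ composed with affine maps) and strictly decreasing in $z$, so the sublevel characterization $\phi(O^\star) \le z^\star \iff G(O^\star,z^\star) \le \gamma$ reduces convexity of $\phi$ to a one-line application of the definition of convexity of $h$.

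What your approach buys: it is shorter, avoids the implicit function theorem entirely, and requires no differentiability of $h$ (the paper's proof needs $h$ twice differentiable to form the Hessian). You are also right that the symmetric operand structure $[o_i,-o_i]$ plays no role in your argument---the same proof shows the general MP function $\sum_i h(o_i - z) = \gamma$ is convex in its full argument vector. What the paper's approach buys: the explicit first- and second-derivative formulas it derives along the way (in particular the gradient bound $\|\nabla\phi\| \le 1$) are of independent interest, and the diagonal Hessian entries $\partial^2\phi/\partial o_i^2$ reappear in the proof of Theorem~\ref{Thm1} via Price's theorem, so the computation is not wasted effort in the larger context.
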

\begin{proof}
The theorem can be proved by demonstrating that the Hessian of the MP function $\nabla^2 \phi(O)$ is a positive semidefinite(PSD) matrix.

With symmetric operands $O = [o_i, -o_i]$, from Eqn.~(\ref{eqn:MPbasic}) we have
\begin{equation}
    \sum_{i=1}^{n} h(o_i - z) + h(-o_i - z) = \gamma.
\end{equation}
Taking the partial derivative with respect to $o_j$ and applying the chain rule,
 \begin{equation}
     \sum_{i=1}^{n} \frac{\partial h(o_i - z)}{\partial o_j} + \frac{\partial h(-o_i - z)}{\partial o_j} = 0,
 \end{equation}
 \begin{equation}
\sum_{i=1}^{n} \frac{\partial h(o_i - z)}{\partial (o_i - z)} \frac{\partial (o_i - z)}{\partial o_j} + 
\sum_{i=1}^{n} \frac{\partial h(-o_i - z)}{\partial (-o_i - z)} \frac{\partial (-o_i - z)}{\partial o_j} = 0. 
 \end{equation}
 Rearranging the terms, we arrive at
  \begin{equation}
\frac{\partial h(o_j - z)}{\partial (o_j - z)} -  \frac{\partial h(-o_j - z)}{\partial (-o_j - z)} = \frac{\partial z}{\partial o_j} \sum_{i=1}^{n} \left[ \frac{\partial h(o_i - z)}{\partial (o_i - z)} +  \frac{\partial h(-o_i - z)}{\partial (-o_i - z)} \right].
\end{equation}
So the first order derivative is expressed as
\begin{equation}
 \frac{\partial z}{\partial o_j}  = \frac{h'(o_j - z) - h'(-o_j - z)}{\sum_{i=1}^{n} \left[ h'(o_i - z) + h'(-o_i - z) \right]},
\end{equation}
where $h'(o - z)$ denotes
 $\frac{\partial h(o - z)}{\partial (o - z)}$. Note that the norm of this expression is bounded by 1 because $h'(o - z) > 0$ due to its increasing monotonicity. Hence, the gradient norm $\|\nabla \phi(O)\|$ is bounded as well.

Applying the above techniques again, we can get the second-order derivative as  
\begin{equation}
\frac{\partial^2 z}{\partial o_j^2} = \frac{d_j^{'+} \sum_{i=1}^{n} d_i^+ - d_j^{'-} d_j^- - d_j^- d_j^{'-} + d_j^- d_j^- \sum_{i=1}^{n} \frac{d_i^{'+}}{\sum_{i=1}^{n} d_i^+}}{\left( \sum_{i=1}^{n} d_i^+ \right)^2},
\end{equation}

\begin{equation}
\frac{\partial^2 z}{\partial o_j \partial o_k} = \frac{- d_j^{'-} d_k^- - d_k^{'-} d_j^- + d_k^- d_j^- \sum_{i=1}^{n} \frac{d_i^{'+}}{\sum_{i=1}^{n} d_i^+}}{\left( \sum_{i=1}^{n} d_i^+ \right)^2},
\end{equation}

where $\sum_{i=1}^{n} d_i^+ = \sum_{i=1}^{n} h'(o_i - z) + h'(-o_i - z)$, $\sum_{i=1}^{n} d_i^- = \sum_{i=1}^{n} h'(o_i - z) - h'(-o_i - z)$, $\sum_{i=1}^{n} d_i^{'+} = \sum_{i=1}^{n} h''(o_i - z) + h''(-o_i - z)$, and  $\sum_{i=1}^{n} d_i^{'-} = \sum_{i=1}^{n} h''(o_i - z) - h''(-o_i - z)$.

Now we prove $\nabla^2 \phi(O)$ is PSD by showing 
$\boldsymbol{x} \nabla^2 \phi(O) \boldsymbol{x}^T$
is always greater than or equal to zero for an arbitrary vector $\boldsymbol{x}$.
\begin{align}
\boldsymbol{x} \nabla^2 \phi(O) \boldsymbol{x}^T & = \sum_{i=1}^{n} \sum_{j=1}^{n} \frac{\partial^2 z}{\partial o_i \partial o_j} x_i x_j = \sum_{i=1}^{n} x_i \sum_{j=1}^{n} \frac{\partial^2 z}{\partial o_i \partial o_j} x_j \\ \nonumber
& = \frac{1}{\left( \sum_{i=1}^{n} d_i^+ \right)^2} \left( \left(  \frac{\sum_{i=1}^{n}d_i^{'+}}{\sum_{i=1}^{n} d_i^+} \right) \left( \sum_{i=1}^{n} x_i d_i^- \right)^2 + \left( \sum_{i=1}^{n} d_i^+ \right) \left( \sum_{i=1}^{n} x_i^2 d_i^{'+} \right)\right) \\ \nonumber
& - \frac{2}{\left( \sum_{i=1}^{n} d_i^+ \right)^2}  \left( \sum_{i=1}^{n} x_i d_i^{'-} \right) \left( \sum_{i=1}^{n} x_i d_i^- \right).
\end{align}

To prove the above expression is greater than or equal to zero, notice that the denominator and $\sum_{i=1}^{n} d_i^+$ are always greater than zero, so we just need to prove
\begin{equation}
    \left( \sum_{i=1}^{n} d_i^{'+}\right) \left( \sum_{i=1}^{n} x_i d_i^- \right)^2 -  2\left( \sum_{i=1}^{n} d_i^{'+}\right)\left( \sum_{i=1}^{n} x_i d_i^{'-} \right) \left( \sum_{i=1}^{n} x_i d_i^- \right) + \left( \sum_{i=1}^{n} d_i^+ \right)^2 \left( \sum_{i=1}^{n} x_i^2 d_i^{'+} \right)  \ge 0.
\end{equation}
We first complete the square using the first two terms, so the left hand side is equal to
\begin{align}
\left( \sum_{i=1}^{n} d_i^{'+} \right) \left( \left( \sum_{i=1}^{n} x_i d_i^- \right) - \frac{\left( \sum_{i=1}^{n} x_i d_i^{'-} \right) \left( \sum_{i=1}^{n} d_i^+ \right)}{\sum_{i=1}^{n} d_i^{('+')}} \right)^2 
+ \left( \sum_{i=1}^{n} d_i^+ \right)^2 \left( \sum_{i=1}^{n} x_i^2 d_i^{'+} \right) \\\nonumber
- \left( \sum_{i=1}^{n} d_i^{'+} \right) \left( \frac{\left( \sum_{i=1}^{n} x_i d_i^{'-} \right) \left( \sum_{i=1}^{n} d_i^+ \right)}{\sum_{i=1}^{n} d_i^{'+}} \right)^2.
\end{align}
To prove the last two terms are greater than or equal to zero,  they are first simplified to 
\begin{equation} \label{eqn:convexity variance like term}
\frac{\sum_{i=1}^{n} x_i^2 d_i^{'+}}{\sum_{i=1}^{n} d_i^{'+}} - \left(\frac{ \sum_{i=1}^{n} x_i d_i^{'-} }{ \sum_{i=1}^{n} d_i^{'+}} \right)^2 \geq 0.
\end{equation}
We can treat $\frac{d_i^{'+}}{\sum_{i=1}^{n} d_i^{'+}}$ as the probability for $x_i$. Then from $\text{Var}(|X|) = \mathcal{E}[|X|^2] - \mathcal{E}[|X|]^2 \ge 0$, Cauchy-Schwarz and triangular inequality we get 
\begin{equation}
    \frac{\sum_{i=1}^{n} x_i^2 d_i^{'+}}{\sum_{i=1}^{n} d_i^{'+}} \ge \left(\frac{\sum_{i=1}^{n} |x_i| d_i^{'+}}{\sum_{i=1}^{n} d_i^{'+}}\right)^2 \ge \left(\frac{\sum_{i=1}^{n} |x_i| |d_i^{'-}|}{\sum_{i=1}^{n} d_i^{'+}}\right)^2 \ge \left(\frac{\sum_{i=1}^{n} x_i d_i^{'-}}{\sum_{i=1}^{n} d_i^{'+}}\right)^2,
\end{equation}
which indicates that~\ref{eqn:convexity variance like term} is always true. Therefore it is proved that the Hessian of $\phi(O)$ is positive semi-definite, and $\phi(O)$ is thus convex.
\end{proof}

\section{Estimating correlations using the MP-function}
\label{subsec:MP correlator}
In this subsection, we first formally define the MP correlator function and then prove that its expected output function is a monotonically increasing function of the input correlation for jointly Gaussian distributions utilizing Lemma~\ref{lem3} and the Price theorem.

\begin{Def}\label{Definition of MP correlator function}
For vectors $\boldsymbol{x} = \left[x_i\right], \boldsymbol{y} = \left[y_i\right], i = 1 ...N$, where $x_i$ and $y_i$ are independent and identically distributed (i.i.d.) samples from jointly Gaussian distributed random variables $X$ and $Y$ with following probability density function
\begin{equation}
    	p(X=x,Y=y; R) = \frac{1}{2\pi\sqrt{1-R^2}} \exp \left[-\frac{x^2 + y^2 - 2R x y}{2\left(1- R^2\right)} \right],
\end{equation}
where $R \in (-1,1)$ is the Pearson's correlation between $X$ and $Y$, we define the MP correlator function for a given constraint $\gamma$ as 
\begin{equation}\label{eqn:MP correlator function}
   f(\boldsymbol{x},\boldsymbol{y}) 
   = \phi(O^+; \gamma^+) - \phi(O^-; \gamma^-),
\end{equation}
where $O^+ = [x_i+y_i,-x_i-y_i]$, $O^- = [x_i-y_i,y_i-x_i]$ are the positive and negative operands,
$z = \phi(O; \gamma)$ is given by the MP function in~(\ref{eqn:MPbasic}). Note that $\gamma^+$ is not necessarily equal to $\gamma^-$.
\end{Def}

In the next theorem, we state that the expected output of the MP correlator function defined above increases monotonically with respect to the correlation $R$ between inputs.
\begin{Thm}\label{Thm1} The expected output $\mathcal{E}[f(\boldsymbol{x},\boldsymbol{y})]$ of MP correlator function defined in~\ref{eqn:MP correlator function} is a monotonically increasing function $G_{MP}(R)$ of the correlation $R$ between jointly normal distributed inputs $X$ and $Y$.
\end{Thm}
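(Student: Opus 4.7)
The plan is to differentiate $\mathcal{E}[f(\boldsymbol{x},\boldsymbol{y})]$ with respect to the correlation $R$ and then argue that this derivative is non-negative. The two ingredients I would combine are Price's theorem, which converts the derivative in $R$ into an expectation of mixed partials in $x_i$ and $y_i$, and the convexity result of Lemma~\ref{lem3}, which controls the sign of those mixed partials.

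First I would invoke the standard multivariate extension of Price's theorem: since $(x_1,y_1,\ldots,x_N,y_N)$ is jointly Gaussian with a block-diagonal covariance in which each pair $(x_i,y_i)$ has off-diagonal entry $R$ and distinct pairs are independent, one obtains
\begin{equation}
\frac{d}{dR}\mathcal{E}[f(\boldsymbol{x},\boldsymbol{y})] \;=\; \sum_{i=1}^{N}\mathcal{E}\!\left[\frac{\partial^2 f}{\partial x_i\,\partial y_i}\right].
\end{equation}
Next I would compute the mixed partial by a careful chain rule. Indexing $O^{\pm}$ so that the $i$-th pair of operands occupies positions $2i-1$ and $2i$, and letting $e_i\in\mathbb{R}^{2N}$ denote the vector with $+1$ at position $2i-1$, $-1$ at position $2i$, and zeros elsewhere, one verifies that $\partial_{x_i}O^+ = \partial_{y_i}O^+ = e_i$, while $\partial_{x_i}O^- = e_i$ and $\partial_{y_i}O^- = -e_i$. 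Applying the chain rule twice to $f = \phi(O^+;\gamma^+) - \phi(O^-;\gamma^-)$ then yields
\begin{equation}
\frac{\partial^2 f}{\partial x_i\,\partial y_i} \;=\; e_i^{T}\, H^{+}\, e_i \;+\; e_i^{T}\, H^{-}\, e_i,
\end{equation}
where $H^{\pm}$ denote the Hessians of $\phi$ evaluated at $O^{\pm}$; the subtraction in the definition of $f$ combined with the sign flip $\partial_{y_i}O^- = -e_i$ is what makes \emph{both} quadratic forms enter with a positive sign.

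The third step is to invoke Lemma~\ref{lem3}: because $O^+$ and $O^-$ are each built from symmetric operand pairs, the Hessians $H^+$ and $H^-$ are positive semi-definite, so $e_i^{T} H^{\pm} e_i \geq 0$ pointwise. Taking expectation and summing over $i$ gives $\frac{d}{dR}\mathcal{E}[f]\geq 0$, which is the stated monotonicity; setting $G_{MP}(R) := \mathcal{E}[f(\boldsymbol{x},\boldsymbol{y})]$ closes the argument, with strict monotonicity holding whenever $h$ is strictly convex on a subset of $O^{\pm}$-values of positive Gaussian measure.

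The hard part will be the regularity needed to legitimize Price's theorem and the second-derivative step. The function $\phi$ is defined only implicitly through~(\ref{eqn:MPbasic}), and in the hardware-relevant case $h(x)=\max(0,x)$ it is merely $1$-Lipschitz and convex (the gradient bound $|\partial\phi/\partial o_j|\leq 1$ is already implicit in the proof of Lemma~\ref{lem3}), rather than classically $C^2$. To make the Hessian step rigorous I would either (i) first restrict to a smooth strictly convex mollification $h_\varepsilon$ of $h$, run the whole argument there where all derivatives are classical, and then pass to the limit $\varepsilon\downarrow 0$ using dominated convergence enabled by the Lipschitz bound; or (ii) interpret the Hessian distributionally and move the derivatives onto the Gaussian density via integration by parts, for which the Lipschitz control on $\nabla\phi$ again supplies the decay at infinity. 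The chain-rule bookkeeping is elementary once the index convention is fixed, but it is the single place where a sign error would silently destroy the positivity, so verifying the cancellation $(-)\cdot(-e_i^T H^- e_i) = +e_i^T H^- e_i$ is the pivotal algebraic step.
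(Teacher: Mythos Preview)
Your approach matches the paper's: Price's theorem turns $\partial_R\mathcal{E}[f]$ into a sum of expected mixed partials, the chain rule reduces these to second derivatives of $\phi$, and Lemma~\ref{lem3} supplies the sign. One small point of care: Lemma~\ref{lem3} establishes positive semi-definiteness of the $N\times N$ Hessian with respect to the \emph{base} operands $o_i$ (the symmetric pairing $[o_i,-o_i]$ is hard-wired into the function), not of the unconstrained $2N\times 2N$ Hessians $H^\pm$ you invoke; your conclusion survives because $e_i^T H^\pm e_i$ is exactly the $(i,i)$ diagonal entry of that restricted Hessian, and the paper simply works in the $N$-variable parameterization from the outset, reading off $\partial^2\phi/\partial(o_i^\pm)^2\ge 0$ directly.
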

\begin{proof}
To show this, we need to prove that $\frac{d G_{MP}(R)}{d R} = \frac{\partial E\left[f\right]}{\partial R}$ is always greater than 0. Because $x_i, y_i$ are i.i.d. samples from unit variance jointly normal variates, the covariance matrix $[\rho_{ij}]$ representing the correlations between $x_i$ and $y_j$ will be a diagonal matrix where 
\begin{equation}
\rho_{ii} = R \text{ and } \rho_{ij} = 0, \text{ for } i\neq j. 
\end{equation}
On the other hand, the correlations between $x_i$ and $x_j$ and the correlation between $x_i$ and $y_j$ for $i \neq j$ are always zero, and the correlations between $x_i$ and itself and the correlations between $y_i$ and $y_j$ are always one. As a result, the expected output function should not be a function of these correlations.
According to the Price theorem, we have 
\begin{equation}
     \frac{\partial E\left[f\right]}{\partial R} = \sum_i \frac{\partial E\left[f\right]}{\partial \rho_{ii}} = \sum_i E\left[\frac{\partial^2 f}{\partial x_i\partial y_i}\right],
\end{equation}
where
\begin{equation} 
\label{eqn:total derivative}
    \frac{\partial^2 f}{\partial x_i\partial y_i} = \frac{\partial^2 \phi(O^+)}{\partial x_i\partial y_i} - \frac{\partial^2 \phi(O^-)}{\partial x_i\partial y_i}.
\end{equation}
Using the chain rule, we have 
 \begin{equation}
     \frac{\partial}{\partial x_i} \phi(O^+) = \sum_{k=1}^{n} \frac{\partial \phi}{\partial o^+_k} \cdot \frac{\partial o^+_k}{\partial x_i} = \frac{\partial \phi}{\partial o^+_i},
 \end{equation}
where $o^+_k = x_k + y_k$. This leads to 
 \begin{equation}\label{eqn:pos double deriv}
\frac{\partial^2 \phi(O^+)}{\partial x_i \partial y_i}
= \frac{\partial}{\partial y_i} \left( \frac{\partial \phi}{\partial o^+_i} \right)
= \sum_{k=1}^{n} \frac{\partial^2 \phi}{\partial o^+_i \partial o^+_k} \cdot \frac{\partial o^+_k}{\partial y^+_i}
= \frac{\partial^2 \phi}{\partial (o^+_i)^2}.
 \end{equation}
Similarly, it can be shown that 
\begin{equation}\label{eqn:neg double deriv}
\frac{\partial^2 \phi(O^-)}{\partial x_i \partial y_i}
= -\frac{\partial^2 \phi}{\partial (o^-_i)^2}.
\end{equation}
From the convexity of $\phi$, we have ~\ref{eqn:pos double deriv} $>0$ and~\ref{eqn:neg double deriv} $<0$, substituting them into~\ref{eqn:total derivative}, we have 
 \begin{equation} \label{eqn:total nonnegative derivative}
    \frac{\partial E\left[f\right]}{\partial R} \ge 0.
\end{equation}
\end{proof}
Due to Theorem~\ref{Thm1}, the calibration function $G_{MP}^{-1}$ can be well defined, allowing the construction of the energy-efficient analog correlator using the MP function.

\begin{figure}[htbp]
  \centering
\includegraphics[width=1.0\textwidth]{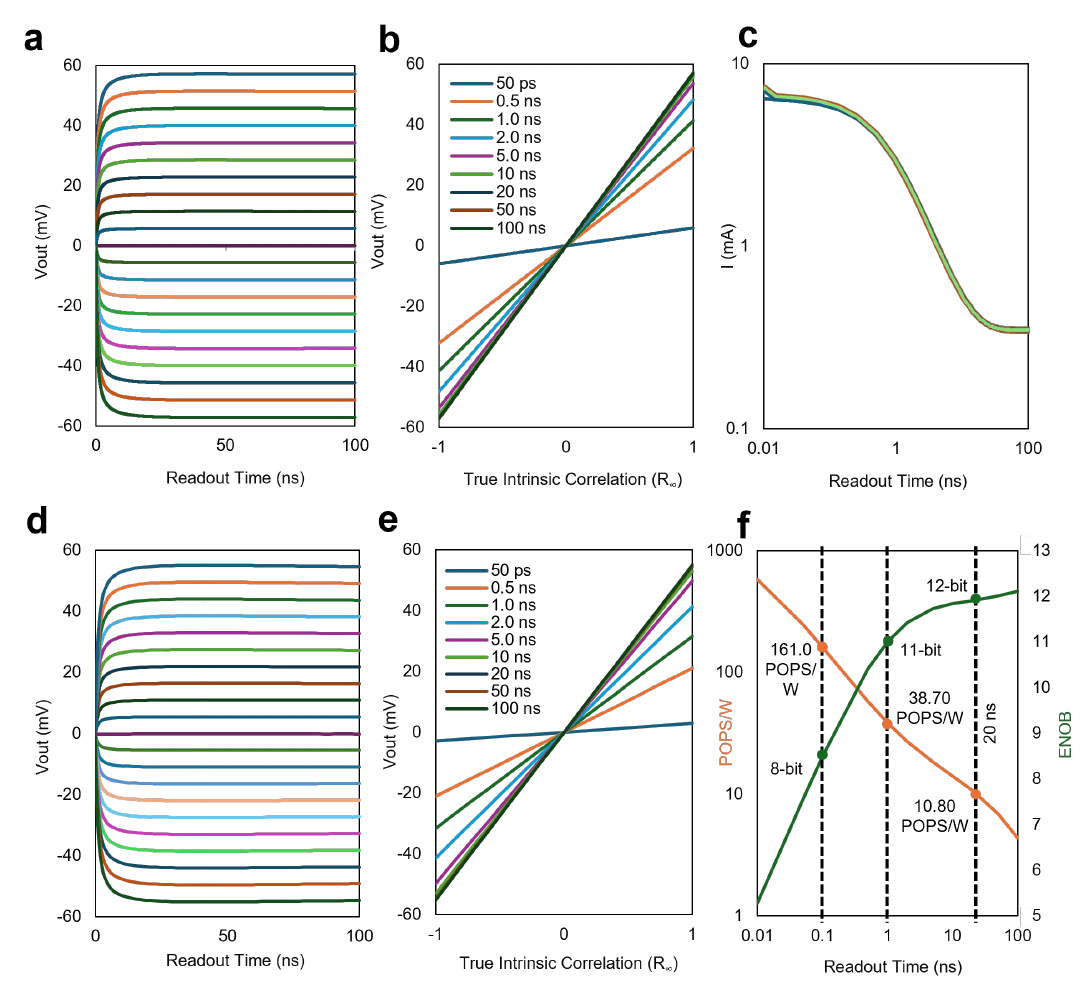}
  \caption{Simulation results for an MP correlator with length N=1024, $R = \SI{3000}{\ohm}$ and $C = \SI{50}{\pico\farad}$. (a) Time evolution of output voltages under varying correlation levels in numerical simulations. (b) Output voltages at selected time points for different correlation values in numerical simulations. (c) Current dynamics in the positive and negative branches across different correlation levels in circuit simulations. (d) Time evolution of output voltages in circuit simulations for varying correlations. (e) Output voltages at selected time points for different correlation values in circuit simulations. (f) Performance metrics—peta operations per second per watt (POPS/W) and effective number of bits (ENOB)—versus readout time. }
  \label{fig:behavioral simulations}
\end{figure}

\section{Simulations of MP-correlator transient dynamics }
\label{sec:behavioral simulations with cap}
This section explores the dynamics of MP correlators when the correlation is measured while the MP dynamics has not reached its steady-state. In practical hardware implementation, the correlator cannot reach the steady-state solution instantaneously, due to parasitic circuit elements and interconnect capacitance. Usually, the readout is performed after the output converges, that is, waiting about $5RC$ constant time. However, this section demonstrates the ability of MP correlators to make readouts before steady states, through both numerical and circuit simulations. We show that with a shorter readout time, the MP correlator can reach higher throughput and energy efficiency, achieving peta operations per second (POPS) and POPS per watt (POPS/W).

To study how parasitics in hardware could affect the time domain evolution of the MP correlator output, we model the dynamics as follows
\begin{subequations}
\begin{align}
    \sum_{n=1}^{N}I_{DS}(V_o + x_i,V_o + y_i, V_s^+(t))+I_{DS}(V_o - x_i,V_o - y_i,V_s^+(t))=\frac{V_s^+(t)}{R}+C \frac{dV_s^+(t)}{dt},
\\
        \sum_{n=1}^{N}I_{DS}(V_o + x_i,V_o - y_i, V_s^-(t))+I_{DS}(V_o - x_i,V_o + y_i,V_s^-(t))=\frac{V_s^-(t)}{R}+C \frac{dV_s^-(t)}{dt},
\end{align}
\end{subequations}
where $I_{DS} (V_G, V_B , V_s)$ is the drain-to-source current of a transistor implementing, C is the parasitic capacitance, and R is the tunable resistance. In this case, the constraint of the MP function $\gamma(t) = \frac{V_s(t)}{R}+C \frac{dV_s(t)}{dt}$ is a function of time as well.
By linearization, we can show that 
\begin{equation}
        \frac{dV_s(t)}{dt} = \frac{1}{C}\left(\sum_{n=1}^{N}I_{DS}' V_s(t) - \frac{1}{R}\right),
\end{equation}
where $I_{DS}' < 0$ is the derivative of transistor currents with respect to the source voltage. It can be seen that the system is stable and $V_s$ is guaranteed to converge.

This dynamical system is numerically simulated and confirmed by circuit simulations for sinusoidal inputs.  The simulation results for a MP correlator of length $N = 1024$, with $R = \SI{3000}{\ohm}$ and $C = \SI{50}{\pico\farad}$ are shown in Fig.~\ref{fig:behavioral simulations}. Fig.~\ref{fig:behavioral simulations}a and d depict the output voltage dynamics over time for different input correlation levels, obtained from numerical and circuit simulations, respectively. Fig.~\ref{fig:behavioral simulations}b and e provide a zoomed in comparison of the output voltages at specific time instants across various correlations between the two simulations. The results demonstrate that the model qualitatively captures the key transient behaviors during the system's evolution.

Fig.~\ref{fig:behavioral simulations}c shows the temporal evolution of currents in the positive and negative branches across various correlation levels, reflecting the behavior of the constraint $\gamma(t)$. Although some initial fluctuations are present, the variations are minimal, and 
$\gamma(t)$ rapidly stabilizes. This convergence allows the output voltage to preserve a monotonic relationship with the correlation during the transient phase, consistent with the theoretical results in Section~\ref{subsec:MP correlator}.

Fig.~\ref{fig:behavioral simulations}f illustrates the variation of performance metrics with respect to readout time. While the output voltage and ENOB stabilize after \SI{20}{\nano\second}, achieving 12-bit precision and 10.8 POPS/W, the readout time can be reduced to as low as \SI{1}{\nano\second} or even \SI{0.1}{\nano\second} to enhance energy efficiency—yielding 11-bit and 8-bit precision with corresponding efficiencies of 38.7 POPS/W and 161.0 POPS/W, respectively. This improvement, however, comes at the cost of a reduced voltage swing. Similar observations are made in hardware measurements as shown in Fig.~E3g in the main text, where the measured ENOB is limited to 8.5 bits due to the resolution of the oscilloscope used during the hardware measurements.

\section{Estimating transient Signal Processing Gain }
Maintaining the 3dB signal processing gain for doubling the ensemble size is crucial for the scalability of the MP approach. In the main text, numerical simulations demonstrate that the MP collective computing approach can achieve the same 3dB processing gain at different time steps in its dynamics as the MAC approach for ensemble sizes $N$ from $2^6$ to $2^{18}$. In this section, we describe the process by which the simulation is conducted. 
The dynamical system is modeled as follow
\begin{subequations}\label{eqn:signal_processing_gain_dynamics}
\begin{align}
    \sum_{n=1}^{N}\left[x_i + y_i - z^+(t)\right]_+ +\left[-(x_i + y_i) - z^+(t)\right]_+ = \frac{z^+(t)}{R}+C \frac{dz^+(t)}{dt},
\\
    \sum_{n=1}^{N}\left[x_i - y_i - z^-(t)\right]_+ +\left[-(x_i - y_i) - z^-(t)\right]_+ = \frac{z^-(t)}{R}+C \frac{dz^-(t)}{dt},
\end{align}
\end{subequations}
which captures the same features in the hardware model but uses the ReLU function for nonlinearity $h$. Here, inputs $x_i$ and $y_i$ are correlated Gaussian pseudo-random variables, $R$ and $C$ are hyperparameters, where $R$ determines the static $\gamma$ in the steady state, $C$ controls the update rate, and the converging time constant $RC$. 
At steady state, this is equivalent to the steady state solution of the following maximum entropy function, 
\begin{equation}
    S_{2}^{\pm} + \alpha^{\pm}\sum{(p_i^{\pm} + q_i^{\pm})}+\beta^{\pm}\left(\sum{(x_i\pm y_i)(p_i^{\pm} - q_i^{\pm}) - U^{\pm}}\right), \\
\end{equation}
which gives us $z := \alpha/\beta$ and $\gamma := 2/|\beta| = z/R$. To ensure that the system has the same dynamic range for different ensemble sizes, we need to scale $\gamma$ linearly with $N$, that is, $R$ needs to go down. At the same time, $C$ can be linearly scaled up with $N$ to maintain the same convergence speed. In simulation, we choose $R = 25/N$ and $C$ = $10^{-5}N$ with a time step of $dt = 10^{-5}$. The above observation has the same implications for the design of a hardware system as well.

Note that the output trajectory and output patterns at different times displayed in~Fig.~\ref{fig:entropy_fig1}g and h are mainly dictated by the way $\beta$ or, equivalently, $\gamma$, changes. The update equation in~\ref{eqn:signal_processing_gain_dynamics} results in an output trajectory similar to that of hardware measurements, because both are moving in the direction of current or charge quantities in each device. Other Lagrangian schedules are possible to mimic the parasitic inductive effects, which result in a different trajectory. This
is discussed in more detail in the next section. It was not observed in the hardware measurements or circuit simulations, as the parasitic inductance is negligible.

\section{Simulating effect of inductance on MP dynamics}
In Section~4.1, the simulation method is discussed for one possible output trajectory of the maximum entropy optimization, which resembles actual hardware measurements and behavioral models as discussed in Section~\ref{sec:behavioral simulations with cap}. It is suggested that a more complex trajectory is possible if the parasitic inductance is not negligible. We model this by $L_{par}$ in between with the source of NMOS and the $R_{sink}$ and $C_{par}$, which gives a dynamical system below
\begin{subequations} \label{eqn:second_order_hardware_dynamics}
\begin{align}
  \sum_{i=1}^{N} I_{DS} (V_o+x_i,V_o\pm y_i,V_s^{\pm}(t) ) +I_{DS} (V_o\mp x_i,V_o-y_i,V_s^{\pm}(t))= \frac{V_{out}^{\pm}(t)}{R_{sink}}+C_{par}\frac{dV_{out}^{\pm}(t)}{dt},  \\
   L_{par}\sum_{i=1}^{N} \frac{dI_{DS} (V_o+x_i,V_o\pm y_i,V_s^{\pm}(t) )}{dt}+\frac{dI_{DS} (V_o\mp x_i,V_o-y_i,V_s^{\pm}(t))}{dt} = V_s^{\pm}(t) - V_{out}^{\pm}(t).
\end{align}
\end{subequations}
where $V_s^{\pm} - V_{out}^{\pm}$ denotes the voltage across the inductor.

Fig.~\ref{fig:RLC behavioral simulations} displays the output dynamics for sinusoidal inputs with different correlation levels in numerical simulations for the above second-order MP correlator model with 
 $N = 1024$, $R = \SI{1000}{\ohm}$, $C = \SI{100}{\pico\farad}$, and $L = \SI{2}{\nano\henry}$ in Fig.~\ref{fig:RLC behavioral simulations}a and b.  The maximum entropy system simulation for correlated Gaussian inputs with a similar dynamics using $h(.)=\text{max}(0,.)$ is shown in  Fig.~\ref{fig:RLC behavioral simulations}c and d.

The similarities of the output trajectory in Fig.~\ref{fig:RLC behavioral simulations}a and c demonstrate the flexibility of the maximum entropy framework as a model of the MP correlator by changing its update schedule of the Lagrangian multiplier. The differences in output functions in Fig.~\ref{fig:RLC behavioral simulations}b and d arise from the difference in the nonlinear functions adopted in the simulations (ReLU for maximum entropy optimization and the saturation current function of MOSFETs for circuit simulations). 
\begin{figure}[htbp]
  \centering
\includegraphics[width=1.0\textwidth]{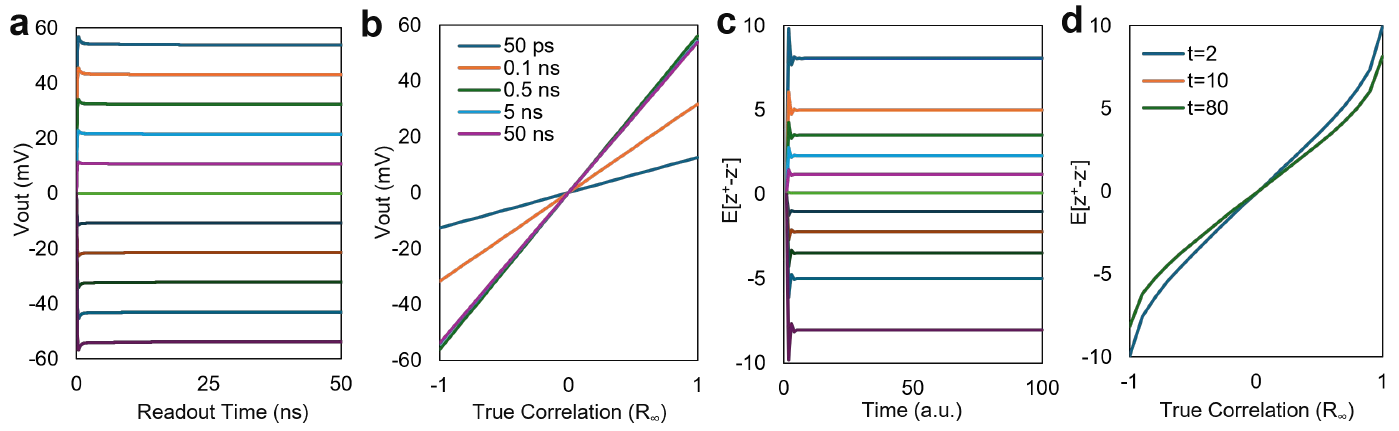}
  \caption{Another possible output dynamics for the maximum entropy simulation and MP correlators with parasitic inductance. (a) Output dynamics across different correlation levels of MP correlators with parasitic inductance. (b) Output voltages of MP correlator simulations at selected time points for different correlation values in circuit simulations. 
  (c) Dynamics of the expected differential output of the maximum entropy simulations with $h(.) = \text{max}(0, .)$.  (d) The expected differential outputs of the maximum entropy simulation at different time-steps, as the correlation is varied. } 
  \label{fig:RLC behavioral simulations}
\end{figure}

\section{Performance comparison with other hardware platforms}
Different hardware platforms are optimized for specific computational workloads, and their I/O (input/output) subsystems are designed accordingly. This means I/O characteristics must be taken into account when comparing the performance of such systems. For example, GPUs typically use memory types like LPDDR~\cite{micron_lpddr5x_2023} or HBM~\cite{micron_hbm3e_2023}, which are designed for digital data and offer high aggregate bandwidth. However, the bandwidth per I/O pin is relatively low—around 1 GB/s as shown in Fig.~\ref{fig:IO compare}—due to the digital nature of the data. A similar trend is observed in in-memory computing (IMC) platforms, where the required bandwidth may still be high, but individual I/Os are slower, since weights are embedded locally within the memory-compute units.

In contrast, RF correlators, including this work, require significantly higher I/O speed due to the analog nature of RF signals. Additionally, achieving high resolution in RF processing demands high sample precision. Consequently, RF correlators impose stricter requirements on I/O speed than conventional digital computing systems. For instance, MP systems can support I/O rates as high as 10 GS/s with 8-bit resolution per sample.
\begin{figure}[H]
  \centering
\includegraphics[width=0.5\textwidth]{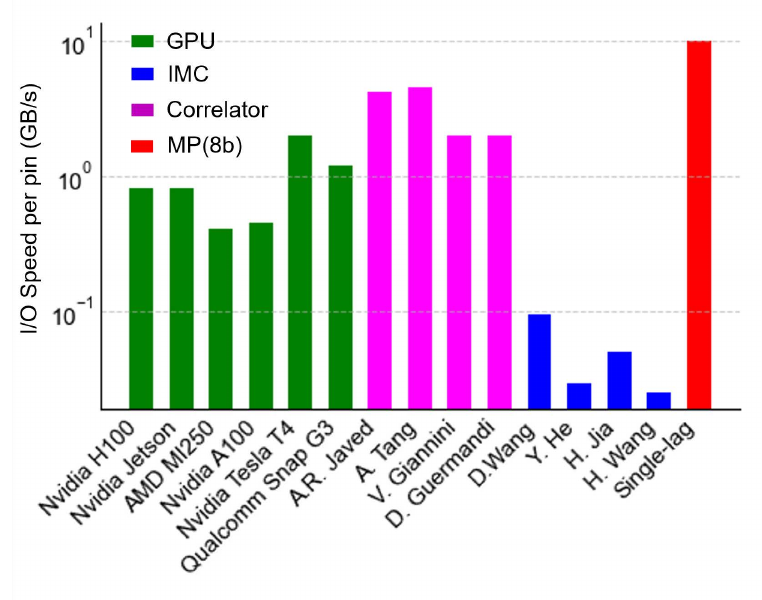}
  \caption{Comparison of I/O speeds (normalized to per pin) between different hardware computing platforms.}
  \label{fig:IO compare}
\end{figure}

\section{Correlator characterization}

The RF correlator ICs were evaluated across a range of experimental setups to assess their performance under both controlled laboratory conditions and representative real-world use cases. A central figure of merit in this evaluation is the hardware dynamic range (HDR), which reflects the effective computational precision of the IC. In correlation-based systems, the total error originates from two main sources: (1) the finite length of the input data window, and (2) inherent hardware computation inaccuracies. When pure sinusoidal waveforms are used as inputs, the error due to finite-length sampling becomes negligible, allowing isolation and direct assessment of hardware-induced errors.

To carry out this characterization, two coherent sinusoidal signals were generated using a Tabor Electronics Proteus Arbitrary Waveform Generator (P9484D), while a Keysight analog signal generator (E8257) supplied the local oscillator (LO) signal used for generating the sampling clock. The correlator’s output was digitized using the high-speed acquisition capability of the Arb. By systematically sweeping the relative phase between the two input signals and recording the corresponding correlation outputs, the deviation from the ideal correlation was computed as shown in Fig.3b of main text. The root-mean-square (RMS) value of this deviation yields the HDR, which is then used to estimate the effective number of bits (ENOB) using the relation: $ENOB = (\mathrm{HDR} - 1.76)/6.02$.

The RF correlator ICs were evaluated across a range of input sinusoidal frequencies and sampling rates to assess the stability and robustness of performance metrics. Each experiment was repeated multiple times, with the first 50 runs used to train the inverse mapping function $G^{-1}$, and the subsequent 100–200 runs reserved for performance testing. The reported hardware dynamic range (HDR) and effective number of bits (ENOB) represent the mean values computed across the test runs. The correlator consistently demonstrated stable HDR performance across varying input frequencies and sampling rates. Based on the ENOB and sampling and compute power consumption, the TOPS/W is computed using eqn. \ref{eqn:topsw_sys} or \ref{eqn:topsw_compute}.

\begin{equation}
    \text{TOPS/W} (\text{System}) = \frac{N \times (\text{ENOB}^2 + \text{ENOB})}{10^{12} \times (E_{\text{Sampling}} + E_{\text{Compute}})} \label{eqn:topsw_sys}
\end{equation}

\begin{equation}
    \text{TOPS/W} (\text{Compute}) = \frac{N \times (\text{ENOB}^2 + \text{ENOB})}{10^{12} \times  E_{\text{Compute}}} \label{eqn:topsw_compute} 
\end{equation}
where,
\begin{align*}
    E_{\text{Sampling}} &= P_{\text{DC, Sampling}} \times \text{Sampling Time}, 
\end{align*}
and the compute energy $E_{\text{Compute}}$ is estimated as detailed in the methods section of the main text. As described in the main text, the IC was further characterized by varying the readout time to explore trade-offs between precision and throughput. The optimal readout time for duty-cycled operation was selected to ensure a minimum ENOB of 8 bits, thereby maintaining high computational fidelity while maximizing data throughput. Based on these measurements, a 20 ns readout time was identified as the optimal point, enabling at least 8-bit ENOB performance while achieving compute efficiencies in the range of several Peta Operations per Watt.

\begin{figure}[htbp]
  \centering
\includegraphics[width=1\textwidth]{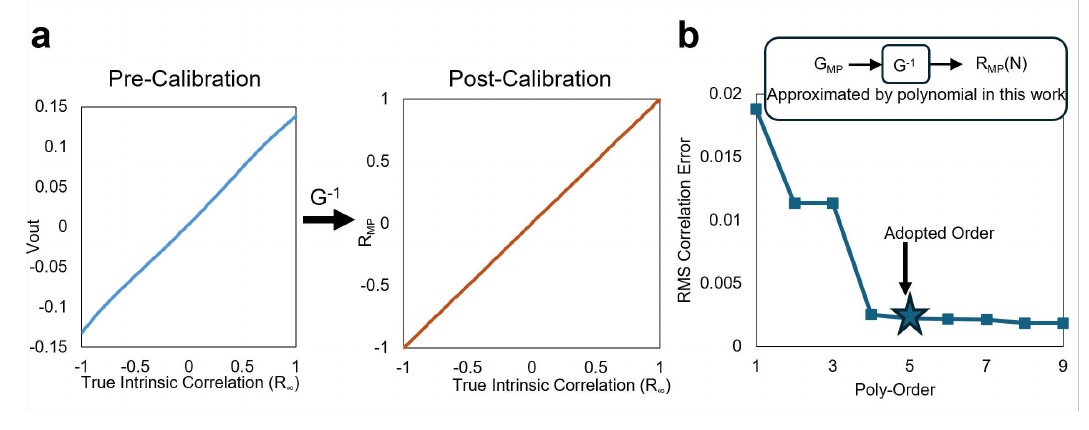}
  \caption{(a) Calibration process for Periodic Inputs. (b) Training correlation error across polynomial order.}
  \label{fig:cal}
\end{figure}
\begin{figure}[H]
  \centering
\includegraphics[width=1\textwidth]{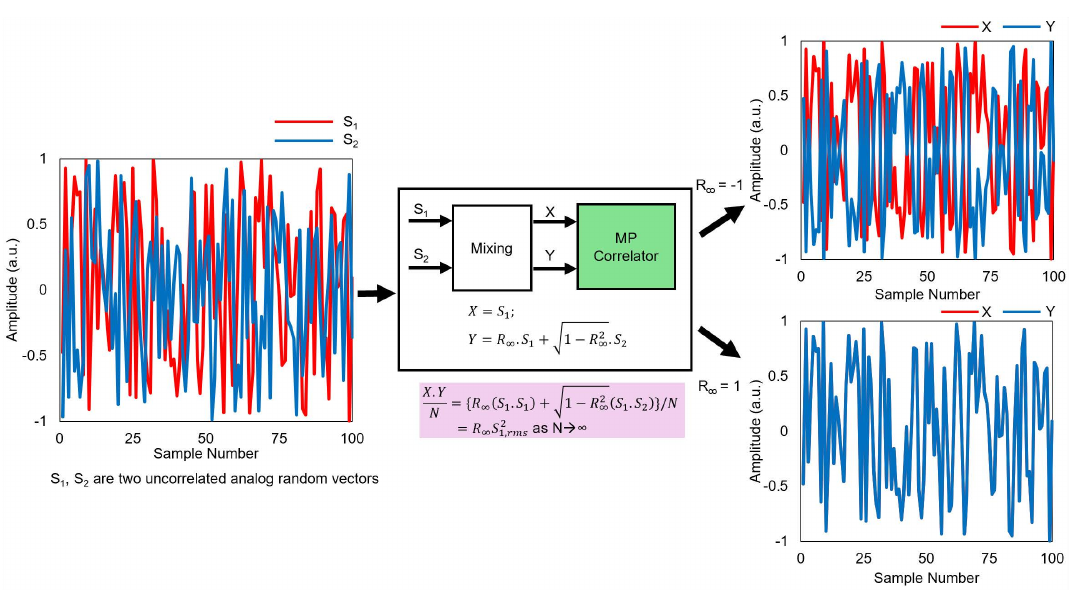}
  \caption{Correlated random inputs generation process.}
  \label{fig:mix}
\end{figure}

\subsection{Calibration procedure and estimating correlations}
As discussed earlier, when applications require accurate estimation of the true correlation or output equivalent to a series of MAC (multiply-and-accumulate) operations, a post-calibration step is necessary. This calibration involves learning a nonlinear mapping function, denoted as $G^{-1}$, that maps the measured correlator output to the true correlation value. The mapping function $G^{-1}$ is application-specific and must be performed once per use-case, unless the operating conditions or signal characteristics change significantly.

During the calibration process, the IC is trained using known ground-truth inputs. One example of such calibration is relative phase estimation between periodic sinusoidal inputs. During training, input signals with phase offsets ranging from $0^\circ$ to $180^\circ$ in $1^\circ$ steps are applied, capturing the full range of correlation values. The corresponding output of the correlator forms a monotonic response spanning approximately \SI{-50}{\milli\volt} to \SI{+50}{\milli\volt}. To map this response to the true correlation range of $[-1, 1]$, a polynomial regression model ($a_0 + a_1 x + a_2 x^2 + .. + a_nx^n$) is trained as shown in Fig.\ref{fig:cal}a. A higher-order polynomial (order $>1$) is required due to the inherent compressive nature of the MP correlator, which causes nonlinear behavior near extreme correlation values (i.e., near $-1$ and $+1$). In Fig.\ref{fig:cal}b, it can be observed that the polynomial training error doesn't improve much after the $5^{th}$ order polynomial.

Similarly, for the esitmating the signal processing gain (SPG), uniformaly distributed random vectors are used. The correlator is trained by injecting two analog vectors $X=S_1$ and $Y=R_{\infty} S_1+\sqrt{1-R_{\infty}^2}S_2$ with varying correlations $R_{\infty}$ and seed, where $S_1$ and $S_2$ are uncorrelated uniform distributed analog random vectors as shown in Fig.\ref{fig:mix}. The output of the correlator is then mapped one-to-one by training polynomial coefficients to the true input correlation. Here, the training error is limited by a finite correlation length. 

This calibration strategy ensures accurate recovery of correlation values and extends the applicability of the MP correlator to a wide variety of real-world signal processing tasks.

\subsection{Signal detection without calibration}
Many RF applications does not require an extensive calibration, and a simple thresholding-based approach suffices. Applications such as spectrum sensing Fig.\ref{fig:thres}a, wireless communication Fig.\ref{fig:thres}b, and code-domain radar Fig.\ref{fig:thres}c inherently produce sparse or "thumbtack"-like correlation responses. In these cases, the correlator output contains sharp peaks corresponding to spectral content (in spectrum sensing), code matches (in wireless communication), or object reflections (in radar). The sparsity of the output allows for direct detection using thresholding techniques without requiring precise mapping to ground-truth values. Therefore, in such scenarios, the MP correlator IC can be directly deployed without the need for any post-calibration. This greatly simplifies system integration and enables real-time, low-power signal detection in dynamic RF environments.

\begin{figure}[htbp]
  \centering
\includegraphics[width=1\textwidth]{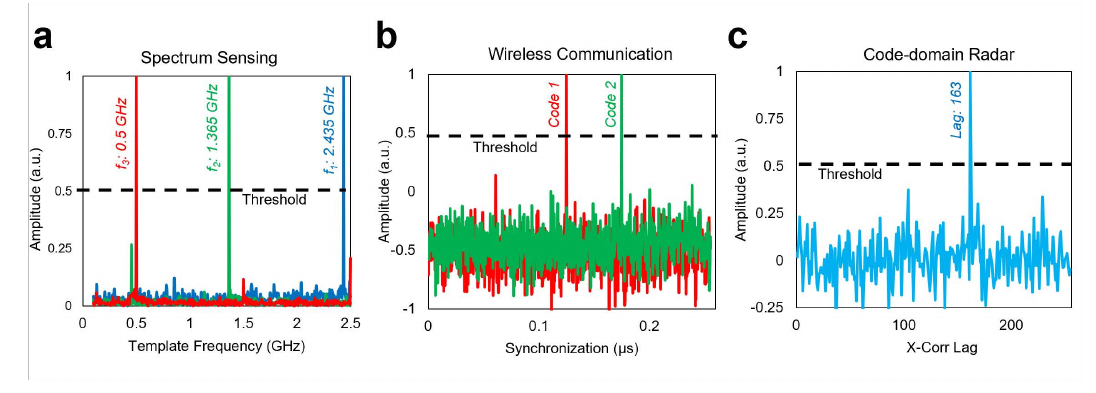}
  \caption{Thresholding-Based Approach. (a) Spectrum Sensing. (b) Wireless Communication (c) Code-domain Radar.}
  \label{fig:thres}
\end{figure}

\section{Process-Voltage-Temperature (PVT) results}
As described in the main text, the MP compute unit cell employs dihedral symmetry and an adaptive source-follower biasing scheme that enhances the robustness of correlation computation against process, voltage, and temperature (PVT) variations. Figure~\ref{fig:pvt}a illustrates the minimal chip-to-chip variation, demonstrating strong immunity to process-induced mismatches. In Figure~\ref{fig:pvt}b, the HDR remains largely unaffected by fluctuations in supply voltage, highlighting the circuit’s voltage robustness. In Fig.~\ref{fig:pvt}c, the measured signal processing gain (here, measurement results are presented from a similar prior IC~\cite{Kareem_JSSC2024})also exhibits negligible variation across a wide temperature range, indicating thermal stability.

\begin{figure}[H]
  \centering
\includegraphics[width=1\textwidth]{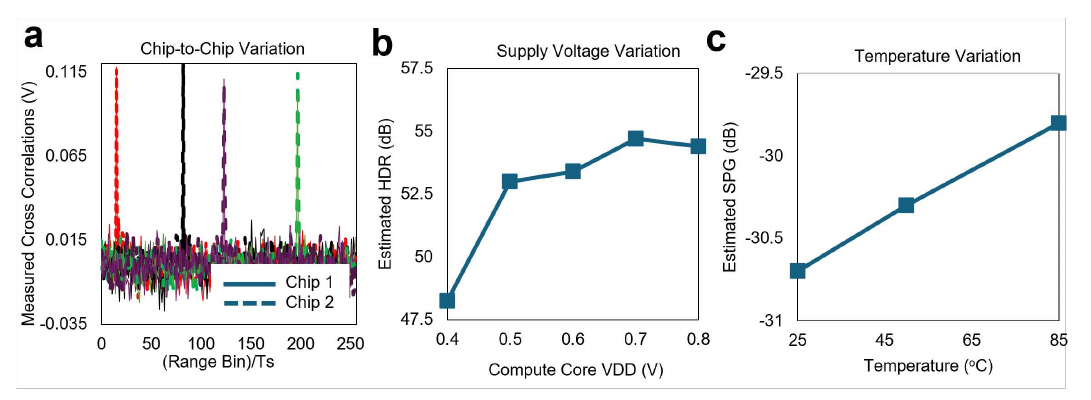}
  \caption{PVT Robustness. (a) Chip-to-chip variation. (b) Supply voltage variation. (c) Temperature variation.}
  \label{fig:pvt}
\end{figure}

\section{Detailed experimental setup}
To characterize and evaluate the performance of the RF correlator ICs, a dedicated test setup was developed as shown in Fig.\ref{fig:setup}. The IC requires a local oscillator (LO) signal to drive its sampling clock, which was supplied by a Keysight PSG analog signal generator. A custom-designed DC biasing board incorporating low-dropout regulators (LDOs) was used to generate the required bias voltages, with power delivered via a Keysight programmable DC power supply. Functional input signals and emulated RF environments were generated using a Tabor Proteus arbitrary waveform generator (Arb). Sub-20 microcontroller was used to program the on-chip SPI.

The output from the IC was amplified using an on-chip differential-to-single-ended buffer and captured using a Rigol oscilloscope. Input and template signals—tailored to the specific application scenario—were generated using the Arb. Due to the 8-bit effective number of bits (ENOB) of the readout oscilloscope, most of the characterization results presented are limited to this resolution ceiling.

For IC characterization, rail-to-rail sinusoidal signals ranging from 0 to 0.8V were applied at the input, with frequency and phase systematically varied to evaluate the correlator's response. In the spectrum sensing experiments, noisy input signals containing single or multiple tones were used, while the template signal was a frequency-swept sinusoid spanning from DC to the Nyquist frequency. These template signals incorporated both in-phase (I) and quadrature (Q) components, enabling the extraction of the input signal’s relative phase through correlation. This capability is particularly relevant for applications such as angle-of-arrival (AoA) estimation.

In the compressive sensing experiment, the input signal was constructed as a sparse signal in the frequency domain. The template signals were generated by selecting K ($\ll$N) random rows of the Discrete Fourier Transform (DFT) matrix and modulating them with pseudo-random sequences. The spectrum of the input signal was reconstructed using the CoSaMP (Compressive Sampling Matching Pursuit) algorithm, which operates under the assumption of sparsity in the input signal.

In the code-domain communication experiments, the input signal consisted of multiple pseudo-random codes modulated onto a 2GHz carrier in a noisy environment with a signal-to-noise ratio (SNR) of 0dB. The correlator simultaneously performed analog downconversion and code-domain demodulation to recover the transmitted codes. The successful recovery of the desired codes in the presence of strong interference highlights the robust selectivity and interference resilience of the proposed system architecture.

In the code-modulated 64 APSK system, a pseudo-random code was mixed with periodic sine and cosine carrier signals with varying amplitudes and phases to generate appropriate constellation points. Then, a 0dB white Gaussian noise and blocker codes are added to the original signal. This noisy signal is then presented as an input to the correlator and I and Q phase periodic signals are used as templates to downconvert and demodulate to extract the codes.

\begin{figure}[H]
  \centering
\includegraphics[width=1\textwidth]{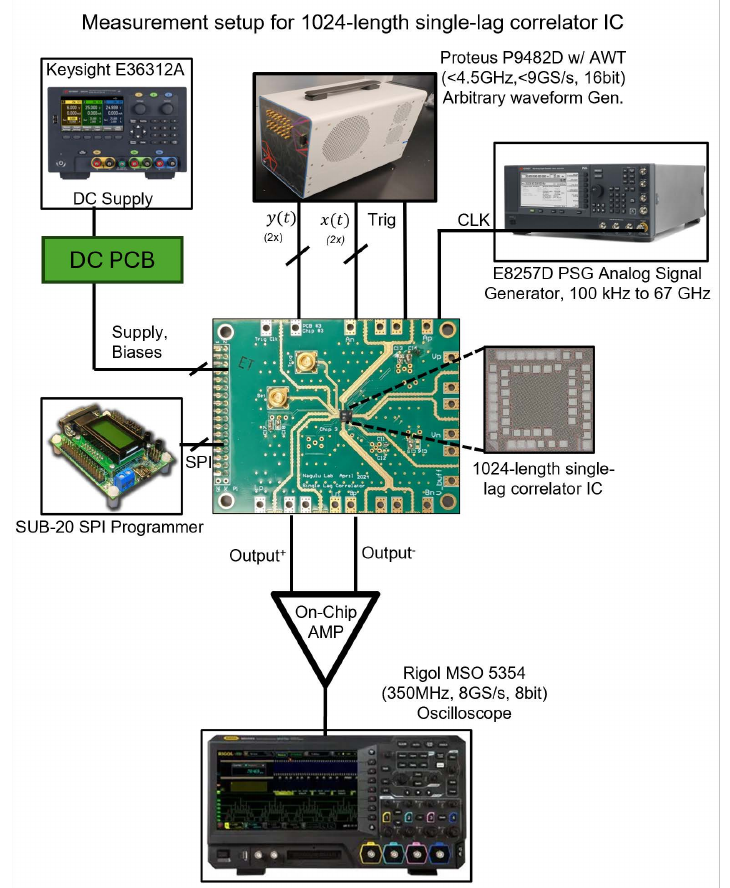}
  \caption{Measurement setup for 1024-length single-lag correlator IC.}
  \label{fig:setup}
\end{figure}






\end{document}